\documentclass{article}

\usepackage{hyphenat}
\usepackage[preprint]{neurips_2026}

\usepackage{amsmath, amsthm, amssymb}
\usepackage[utf8]{inputenc} 
\usepackage[T1]{fontenc}    
\usepackage{url}            
\usepackage{booktabs}       
\usepackage{amsfonts}       
\usepackage{nicefrac}       
\usepackage{microtype}      
\usepackage{xcolor}         
\usepackage{thm-restate}
\usepackage{hyperref}       

\usepackage{mathtools}
\usepackage{booktabs,multirow}
\usepackage{xspace}
\usepackage[dvipsnames]{xcolor}
\usepackage{graphicx}
\usepackage{enumitem}
\usepackage{subcaption}
\usepackage{natbib}
\usepackage[algo2e,ruled,vlined,linesnumbered]{algorithm2e} 
\SetAlgoSkip{}

\usepackage{tikz}
\usetikzlibrary{shapes,arrows}
\usetikzlibrary{decorations}
\usetikzlibrary{plotmarks}
\usetikzlibrary{calc}
\usetikzlibrary{mindmap}
\usetikzlibrary{shadows}
\usetikzlibrary{backgrounds}
\usetikzlibrary{patterns}
\usetikzlibrary{shapes.symbols}

\allowdisplaybreaks

\newtheorem{theorem}             {Theorem}
\newtheorem{lemma}      [theorem]{Lemma}

\newtheorem{definition} [theorem]{Definition}

\newcommand{\CLIMB}{$\textsc{CLIMB}$\xspace}

\newcommand{\NSGADYN}{NSGA-II-DYN\xspace}

\DeclareMathOperator{\Unif}{Unif}                         

\usepackage{xcolor}

\newcommand{\ones}[1]{|#1|_1}                             

\newcommand{\COCZ}{\textsc{COCZ}\xspace}                           

\newcommand{\OMM}{\textsc{OMM}\xspace}                             

\newcommand{\LOTZ}{\textsc{LOTZ}\xspace}                           

\newcommand{\cdist}{\ensuremath{\textsc{cDist}}\xspace}   

\newcommand{\expect}[1]{\mathrm{E}\left[#1\right]}        

\newcommand{\nsga}{NSGA\nobreakdash-II\xspace}
\newcommand{\nsgaIII}{NSGA\nobreakdash-III\xspace}

\newcommand{\zeros}[1]{|#1|_0}

\title{Provable Speedups From Dynamic Population Sizes in Evolutionary Algorithms for Multiobjective Optimization}

\author{
Andre Opris\\
  Faculty of Computer Science and Mathematics\\
  University of Passau\\
  \texttt{andre.opris@uni-passau.de} \\
}

\begin{document}

\maketitle

\begin{abstract}
This paper investigates the role of dynamic population sizes in evolutionary multi-objective optimization. Although such approaches are widely used in practice, their benefits remain poorly understood, and rigorous runtime analyses explaining when and why they help are still scarce. To address this, we introduce the bi-objective problem class \CLIMB and analyze the runtime of GSEMO and the widely used \nsga on this problem. Our results show that allowing a dynamic population size for \nsga can lead to a moderate improvement, yielding a speedup of order $\Omega(\sqrt{n}/\log n)$. In particular, we prove that GSEMO and \NSGADYN, a version of \nsga with dynamic population sizes we propose in this paper, can find the Pareto front of \CLIMB in expected $O(n \log n)$ fitness evaluations, whereas NSGA-II with a fixed population size requires $\Omega(n^{1.5})$ fitness evaluations in expectation. To the best of our knowledge, this is the first rigorous runtime analysis in multi-objective optimization demonstrating a super-constant speedup of GSEMO over \nsga. Our analysis builds on concepts from single-objective optimization, like the evolution of population diversity over time, and employs the well-known family-three method to prove the lower bound. 
\end{abstract}

\section{Introduction}
Many real-world optimization problems are characterized by several, often conflicting objectives. These can be tackled by evolutionary multi-objective algorithms (EMOAs) which mimic principles from natural evolution to evolve a population of solutions to find a Pareto optimal set. A human decision maker may then select a solution that best matches their needs. Due to their population-based nature, EMOAs are among the most prominent approaches to such problems and have found applications in numerous practical domains like vehicle design~\citep{MultiVehicles}, scheduling problems~\citep{704576}, or research fields like artificial intelligence~\citep{LUUKKONEN2023102537}, and especially machine learning~\citep{ZHANG2025112961,NSGAMACHINE,CHOWDHURY2026109365} and neural networks~\citep{NSGANEURAL,MARTINEZCOMESANA2023106770}. The most prominent EMOA is the \nsga~\citep{Deb2002} (with around 60,000 citations). In each iteration, \nsga forms a combined parent–offspring population, which is then partitioned into layers of non-dominated fitness values for survival selection. The first layer contains only non-dominated individuals, while the $i$-th layer consists only of those individuals which are dominated by an individual with rank from $\{1, \ldots , i-1\}$. Individuals are first selected according to their layer rank and ties are broken using crowding distance as a secondary criterion. In a nutshell, the algorithm favors solutions in less crowded regions of the objective space among individuals of equal rank. It turned out, that this algorithm optimizes bi-objective problems very efficiently (see~\citep{10.1007/978-3-540-70928-2_55} for empirical results or~\citep{ZhengLuiDoerrAAAI22,DoerrQu22,DANG2024104098} for rigorous runtime analyses). Despite these extensive studies, a complete theoretical understanding of when and why the algorithm performs well is still lacking, particularly from the perspective of runtime analysis. This is somewhat surprising, given that the mathematical analysis of MOEAs began more than 20 years ago with results on the (global) simple evolutionary multiobjective optimizer (GSEMO)~\citep{LaumannsTZWD02,Giel03,Thierens03}. GSEMO is particularly simple, as it relies solely on Pareto dominance for selection and generates only a single offspring per iteration. Due to its simplicity, it was the first MOEA for which rigorous runtime analyses were carried out, and it remains a central algorithm in which new phenomena are first observed (see, e.g., \citep{DinotDHW23,DaOp2023,doerr2025tightruntimeGSEMO}, as well as applications to submodular optimization~\citep{QianYZ15nips,QianZTY18}). Building on these insights, similar analyses have recently been extended to other widely used MOEAs, including NSGA-II, NSGA-III, SMS-EMOA, and SPEA2 ( see~\citep{ZhengLuiDoerrAAAI22,WiethegerD23,OprisNSGAIII,Zheng_Doerr_2024,RenBLQ24} for breakthroughts). A key difference between GSEMO and these algorithms is that the latter use a fixed population size. In contrast, a dynamic population size has the potential to avoid unnecessary fitness evaluations on low-quality individuals and thereby accelerate the optimization process. While empirical studies already indicate advantages of dynamic population sizes in evolutionary multi-objective optimization~\citep{WANG2022101104,TIAN2025129296,Liang2023}, to the best of our knowledge, there is no corresponding result from the perspective of runtime analysis prior to~\citep{doerr2025speedingpopdynsize}. In particular, we are not aware of any benchmark problem on which GSEMO provably outperforms these popular MOEAs. Instead, most existing results suggest that, in terms of fitness evaluations, GSEMO and these algorithms archive the same runtime guarantees~\citep{Laumanns2004,ZhengLuiDoerrAAAI22}. These insights may guide practitioners in designing improved variants of NSGA-II, NSGA-III, SMS-EMOA, and SPEA2 with enhanced performance, particularly for complex problems featuring rugged fitness landscapes.


\textbf{Our contribution:} We propose a pseudo-Boolean function \CLIMB that serves as an example where using dynamic population sizes in \nsga improves performance. In essence, reaching the Pareto front of \CLIMB is more difficult than covering it. With a static population size, \nsga requires at least $\Omega(n^{1.5})$ fitness evaluations just to find a single Pareto-optimal point. In sharp contrast, GSEMO and \NSGADYN, the latter being a variant of \nsga with a dynamic population size which we will introduce in this paper (see Algorithm~1 below), can cover the Pareto front of \CLIMB in an expected $O(n\log(n))$ number of fitness evaluations. Hence, those algorithms archive a speedup of order $\sqrt{n}/\log(n)$ in runtime. For \NSGADYN, we adapt the population size $\mu_{t+1} = \min \{\mu_t + \lambda_t, 4|S_t|\}$ in each generation, where $S_t$ is a maximum set of mutually incomparable non-dominated solutions of $R_t$ in iteration $t$ and $\lambda_t$ is the offspring population size (compare with Section~2 for details). For the upper bound, a challenge is to bound the maximum population size in expectation during the hill-climbing process, depending on the progress of the algorithm already made towards the Pareto front. One also needs arguments about the population dynamics in the rare case of unsuccessful initializations. To this end, we use the total sum of Hamming distances of the entire population from a given search point, with respect to a specified set of genes, to demonstrate a positive drift toward promising regions of the search space. This is used in~\citep{ClearingI} for describing the \emph{clearing} mechanism to preserve population diversity (compare also with~\citep{Diversity} for similar results). For the lower bound, we apply the well known family tree argument from~\citep{WittLower}. For our purposes, we also adapt the general arguments from~\citep{ZhengLuiDoerrAAAI22} on the preservation of high quality solutions in \nsga to this setting, including the case of dynamic population sizes. We are also confident that we obtain very similar results when analyzing and dynamizing other popular MOEAs like NSGA-III, SMS-EMOA, or SPEA-2.

\textbf{Related work:} The theoretical runtime analysis of MOEAs initially focused on relatively simple algorithms such as (G)SEMO~\citep{LaumannsTZWD02,Laumanns2004,Giel2003,Thierens03}, and was later extended to \nsga, the most widely used MOEA, for the problem to cover the Pareto front~\citep{ZhengLuiDoerrAAAI22,Qu2022PPSN,DoerrQu2022,DoerrQ23b,Dang2024Illustrating,Dang2024,Bian2023}, including combinatorial optimization problems~\citep{Cerf2023,MOEASubset}. Runtime analyses of other popular MOEAs on simple benchmark functions have only emerged in recent years. These include SMS-EMOA~\citep{Zheng_Doerr_2024,ijcai2025p988}, SPEA2~\citep{RenBLQ24,DoerrKS2026}, variants of \nsga~\citep{Krejca2025b}, and NSGAIII~\citep{WiethegerD23,OprisNSGAIII,DoerrNearTight,OPRIS2026,OprisMultimodal,Opris26PopDyn}. It has further been shown that these algorithms can outperform GSEMO exponentially~\citep{Lessons}, including in terms of approximating the Pareto front~\cite{li2026why}, particularly when the Pareto front grows exponentially in the population size $n$. Even for simpler problems, there are results on approximating the Pareto front when its size exceeds the population size. This includes studies on \nsga~\citep{Zheng2022}, NSGA-III~\citep{ApproximationNSGAIII}, and SPEA2~\citep{ApproximationSPEA2}, with the latter two also demonstrating advantages compared to \nsga. Overall, the study of MOEAs remains a highly active research area, with even the behavior of relatively simple algorithms like GSEMO~\citep{doerr2025tightruntimeGSEMO}. However, a widely open question is how dynamic population sizes may accelerate the optimization process. Except for~\citep{doerr2025speedingpopdynsize}, which considers only \nsga on the bi-objective OneMinMax function, no theoretical runtime analyses explicitly study dynamic population sizes in multi-objective optimization. In particular, we are not aware of any work demonstrating the advantages of MOEAs with dynamic population sizes, such as GSEMO, over MOEAs with fixed population sizes.

\section{Preliminaries}
\textbf{Notation}: For a set $B$, denote by $|B|$ its cardinality, and by $\ln$ the logarithm to base $e$. Denote by $\vec{1}_m := (1, \ldots , 1)$ the unit vector of dimension $m$. For two random variables $Y$ and $Z$ on $\mathbb{N}_0$ we say that $Z$ \emph{stochastically dominates} $Y$ if $\Pr(Z \leq c) \leq \Pr(Y \leq c)$ for every $c \geq 0$. For $n \in \mathbb{N}$ let $[n]:=\{1, \ldots , n\}$ and we say that $\mu \in O(\text{poly}(n))$ if $\mu$ does not grow asymptotically faster than a polynomial in $n$. The number of ones in a bit string $x \in \{0,1\}^n$ is denoted by $\ones{x}$ and the number of zeros by $\zeros{x}$, respectively. The \emph{Hamming distance} of two bit strings $x,y \in \{0,1\}^n$ is defined as $H(x,y) = \sum_{j=1}^n |x_j-y_j|$. For a $d$-objective function $f:\{0,1\}^n \to \mathbb{N}_0^d, x \mapsto (f_1(x), \ldots , f_d(x))$, and two search points $x, y \in \{0, 1\}^n$, $x$ \emph{weakly dominates} $y$, written as $x \succeq y$, if $f_{i}(x) \geq f_{i}(y)$ for all $i \in [d]$ and $x$ \emph{(strictly) dominates} $y$, written as $x \succ y$, if one inequality is strict. We call $x$ and $y$ \emph{incomparable} if neither $x \succeq y$ nor $y \succeq x$. Each solution $x$ not dominated by any other in $\{0,1\}^n$ is called \emph{Pareto optimal} and we call $f(x)$ \emph{non-dominated fitness value}. The set of all non-dominated fitness values is called \emph{Pareto front}.

\begin{algorithm2e}[t]
    Initialize $\mu_0 \in \mathbb{N}$ with $\mu_0 \leq r$\\
	Initialize $P_0 \sim \Unif( (\{0,1\}^n)^{\mu_0})$\\
	\For{$t:= 0 \to \infty$}{
		Initialize $Q_t:=\emptyset$\\
        Compute $\lambda_t = h_t(P_t)$\\
		\For{$i:=1 \to \lambda_t$}{
			Sample $s$ from $P_t$ uniformly at random \label{alg:nsga-ii:tournament}\\
			Create $s'$ by bitwise mutation on $s$ with rate $1/n$\\
			Update $Q_t:=Q_t \cup \{s'\}$\\
		}
		Set $R_t := P_t \cup Q_t$, and $\mu_{t+1} = g_t(R_t)$\\
		Partition $R_t$ into layers $F^1_t,F^2_t,\ldots $ of non-dominated solutions \label{line:non-dominated}\\
        Find $i^* \geq 1$ such that $\sum_{i=1}^{i^*-1} \lvert{F^i_{t+1}}\rvert < \mu_{t+1}$ and $\sum_{i=1}^{i^*} \lvert{F^i_{t+1}}\rvert \geq \mu_{t+1}$\\
        For each $x \in F^{i^*}_{t+1}$ compute $\cdist(x,F^{i^*}_{t+1})$ and let $Y_t$ be the set of $\mu_{t+1} - \sum_{i=1}^{i^*-1} \lvert{F^i_{t+1}}\rvert$ individuals from $F^{i^*}_{t+1}$ with highest crowding distance with respect to $F^{i^*}_{t+1}$ where ties are broken randomly\\
		Create the next population $P_{t+1} := \bigcup_{i=1}^{i-1} F_{t+1}^{i^*} \cup Y_t$\\
	}
	\caption{\NSGADYN algorithm for maximizing a given $d$-objective function $f\colon \{0, 1\}^n \to \mathbb{R}^d$. The population size is updated by a function $g_t: \bigcup_{\ell=1}^r (\{0,1\}^n)^\ell \to \mathbb{N}$, and the number of offspring created by a function $h_t: \bigcup_{\ell=1}^r (\{0,1\}^n)^\ell \to \mathbb{N}$ in every generation $t$, where $r>0$ is a positive threshold.}
	\label{alg:nsga-ii}
\end{algorithm2e}

\textbf{Algorithms}: \nsga with dynamic population sizes, called \NSGADYN, is summarized in Algorithm~\ref{alg:nsga-ii} for bitwise mutation. Its vanilla version, where $g_t= h_t = \lambda = \mu_0 = \mu$ for a given population size $\mu \in \mathbb{N}$ and all generations $t$, is originated in~\citep{Deb2002,NSGAIICode2011}. The initial population size is set to $\mu_0 \in \mathbb{N}$, and $\mu_0$ individuals are sampled uniformly at random from $\{0,1\}^n$. In each generation, the new offspring population size $\lambda_t=h_t(P_t)$ is computed, and a population $Q_t$ of $\lambda_t$ offspring is created by repeatedly selecting a parent $s$ from $P_t$ uniformly at random and applying bitwise mutation. That is, each offspring is obtained by flipping each bit of $s$ independently with probability $1/n$. Then the joint population $R_t=P_t \cup Q_t$ of size $\mu_t + \lambda_t$ is computed, and the population size $\mu_{t+1}$ for the next generation is adapted via the function $g_t$ where $1 \leq g_t(R_t) \leq \mu_t + \lambda_t$. The functions $g_t$ and $h_t$ are deterministic and chosen by the user. To avoid unnecessarily large population sizes, we assume $g_t$ and $h_t$ are bounded from above by a positive threshold $r$, which is also set by the user. To ensure that the Pareto front of a given function $f$ can be covered, we require $r \geq 4|S|$, where $S$ is a maximum set of mutually incomparable solutions with respect to $f$. This condition guarantees that first ranked solutions are protected between generations (see Lemma~\ref{lem:first-ranked} below). Note that \NSGADYN may still cover the Pareto front of $f$ even if $\mu_t < 4|S|$ throughout the entire run, since the size of the Pareto front can be much smaller than $|S|$, as is the case for \CLIMB (see Definition~\ref{def:CLIMB} below). The vanilla NSGA-II from~\citep{Deb2002} works with $r=\mu$. During survival selection, the parent and offspring populations $P_t$ and $Q_t$ are joined into $R_t$, and then partitioned into layers $F^1_{t+1},F^2_{t+1},\dots$ by the \emph{non-dominated sorting algorithm} \citep{Deb2002}. The layer $F^1_{t+1}$ consists of all non-dominated search points, and $F^i_{t+1}$ for $i>1$ only contains points that are dominated by those from $F^1_{t+1},\dots,F^{i-1}_{t+1}$. Then the critical rank $i^*$ with $\sum_{i=1}^{i^*-1} \lvert{F^i_{t+1}}\rvert < \mu_{t+1}$ and $\sum_{i=1}^{i^*} \lvert{F^i_{t+1}}\rvert \geq \mu_{t+1}$ is determined (i.e. there are fewer than $\mu_{t+1}$ search points in $R_t$ with a lower rank than $i^*$, but at least $\mu_{t+1}$ search points with rank at most $i^*$). For $P_{t+1}$, all individuals with a rank lower than $i^*$ are selected. The remaining individuals are all taken from $F^{i^*}_{t+1}$, particularly those with highest crowding distance with respect to $F^{i^*}_{t+1}=:M$ where ties are broken uniformly at random. For $M=(x_1,x_2,\dots,x_{|M|})$ the crowding distance is computed as follows. At first sort $M$ as $M=(x_{k_1},\dots,x_{k_{\vert{M}\vert}})$
with respect to each objective $k \in [d]$
separately in descending order. Then
\begin{align}
	\cdist(x_i, M)
	&:= \sum_{k=1}^{d} \cdist_{k}(x_i, M), \label{eq:crowd-dist}
	\text{ where }\\
	\cdist_{k}(x_{k_i}, M)
	&\!:= \! \begin{cases}
		\infty\; & \text{if } i \in \{1, |M|\},\\
        0\; & \text{if } i \notin \{1, |M|\} \text{ and } f_k(x_{k_1}) = f_k(x_{k_M}),\\
		\frac{f_k\left(x_{k_{i-1}}\right) - f_k\left(x_{k_{i+1}}\right)}{f_k\left(x_{k_1}\right) - f_k\left(x_{k_{M}}\right)} & \text{otherwise.}
	\end{cases}\!\!\!\!\label{eq:crowd-dist-eachdim}
\end{align}

The first and last ranked individuals are always assigned an infinite crowding distance with respect to objective $k$. If the first and last ranked individuals have identical values in objective $k$, all remaining individuals are assigned a crowding distance of zero with respect to objective $k$. Otherwise, the crowding distance is computed as the difference between the $f_k$ values of the neighboring individuals ordered directly above and below. This value is then normalized by the difference between the $f_k$ values of the first and last ordered individuals. We propose a $(\mu+\lambda)$ version of \NSGADYN to capture steady state variants on the one hand, when only a few offspring is created per generation (see for example~\citep{NSGASTEADYSTATE}), and the vanilla version on the other hand, when $\mu=\lambda$ remains constant over time.

\begin{algorithm2e}[t]
	Initialize $P_0:=\{s\}$ where $s \sim \Unif(\{0,1\}^n)$\\
	\For{$t:= 0 \to \infty$}{
		Sample $s \sim \Unif(P_t)$ \\
		Create $s'$ by bitwise mutation on $s$ with rate $1/n$\\
		\If{$s'$ is \textbf{not} dominated by any individual in $P_t$}{
			Create the next population $P_{t+1} := P_t \cup \{s'\}$ \\
			Remove all $x \in P_{t+1}$ weakly dominated by $s'$\\
		}
        \Else{$P_{t+1}=P_t$}
	}
	\caption{The GSEMO algorithm~{\protect\citep{LaumannsTZWD02,Giel03}} for maximizing a given bi-objective function $f\colon \{0, 1\}^n \to \mathbb{R}^d$.}
	\label{alg:gsemo}
\end{algorithm2e}

The GSEMO algorithm is shown in Algorithm~\ref{alg:gsemo}.
Starting from one randomly generated solution, in each generation
a new search point $s'$ is created by bitwise mutation with parameter $1/n$ on a parent selected uniformly at random from $P_t$.
If $s'$ is not dominated by any solutions of the current population $P_t$
then it is added to $P_t$, and those weakly dominated by $s'$
are removed from $P_t$. 
Note that the population of GSEMO contains only non-dominated solutions with different fitness vectors, and the population size may vary.

\textbf{Runtime}: We measure runtime as the number of fitness evaluations needed to cover the entire Pareto front, rather than generations, since in \NSGADYN the number of evaluations per generation varies due to its dynamic offspring population size.

\textbf{Test function}: In this paper, we introduce the bi-objective \CLIMB benchmark to illustrate the benefits of a dynamic population size in NSGA-II. The bit string is split into two halves of length $n/2$. In the first half, both objectives align and simply count ones, creating a strong hill-climbing signal: solutions with more ones dominate those with fewer, provided that they have a fitness distinct from zero. In the second half, we simultaneously maximize ones and zeros, making the objectives conflicting. Reaching the Pareto front via hill climbing is intended to be harder than covering it once a Pareto-optimal solution is found. To reflect this, we restrict the Pareto front size to $O(\sqrt{n}) = o(n)$. For simplicity, we assume $n$ is an even square number.

\begin{definition}
\label{def:CLIMB}
Let $n$ be a square number divisible by $2$. Then the bi-objective function $\CLIMB: \{0,1\}^n \to \mathbb{N}_0^2$ is defined as $\CLIMB(x) = (f_1(x),f_2(x))$ with $(f_1(x),f_2(x))=$ 
\[
\begin{cases}
    n \cdot \ones{x^1} \vec{1}_2 + (\ones{x^2},\zeros{x^2}), & \text{ if $(\frac{1}{2}-\frac{2}{\sqrt{n}})\ones{x^1} \leq \ones{x^2} \leq \frac{n}{2}-(\frac{1}{2}-\frac{2}{\sqrt{n}})\ones{x^1}$,} \\
    (0,0), & \text{ else,}
\end{cases}
\]
for all $x=(x_1, \ldots ,x_n) \in \{0,1\}^n$, where $x^1:=(x_1, \ldots , x_{n/2})$ denotes the first half, and $x^2:=(x_{n/2+1}, \ldots , x_n)$ the second half of $x$. 
\end{definition}

Note that all Pareto optimal search points $x$ of \CLIMB satisfy $\ones{x^1} = n/2$ and $n/4 - \sqrt{n} \leq \ones{x^2} \leq n/4 + \sqrt{n}$. Hence, the Pareto front $\mathcal{F}$ of \CLIMB is $\mathcal{F} = \{(n^2/2 + \ell,n^2/2 + n/2-\ell) \mid \ell \in \{n/4-\sqrt{n}, \ldots , n/4+\sqrt{n}\}\}$ which has cardinality $2\sqrt{n}+1$. So the vanilla NSGA-II needs a population size of at least $2\sqrt{n}+1$ to optimize \CLIMB, particularly to cover its Pareto front, which means that for every $v \in \mathcal{F}$ there is $x \in P_t$ with $f(x)=v$. The condition for ensuring $f(x) \neq 0$ (in terms of $\ones{x^1}$ and $\ones{x^2}$) gives the desired cardinality of the Pareto front and guarantees that, with overwhelming probability, a search point with nonzero fitness is initialized. The typical optimization process is that individuals perform hill-climbing with respect to their first half, while keeping a fitness distinct from zero. Further, all search points $x$ with $n/4 - \sqrt{n} \leq \ones{x^2} \leq n/4 + \sqrt{n}$ have nonzero fitness and a maximum set of mutually incomparable solutions $S$ has size at most $n/2 + 1$.

\textbf{Assumptions on $g_t$, $h_t$ and $\mu_0$:} In our main theorem, we assume that $\mu_0=1$ and $\mu_{t+1} = \min\{4|S_t|, \mu_t + \lambda_t\}$, where $S_t$ denotes a maximum set of mutually incomparable non-dominated (first-ranked) solutions in the joint population $R_t$. Of course, $|S_t| \leq |S|$ where $S$ is a maximum set of mutually incomparable solutions. So we let $\mu_t$ depend only on the objective vectors attained by $P_t$, not on their genotype. We consider the choice of $\mu_{t+1}$ appropriate, since non dominated solutions are preserved across generations or even better solutions are created (see Lemma~\ref{lem:first-ranked}), and also dominated solutions in lower layers have a chance to survive. So, even such bad solutions can still contribute to a well spread population, as it is the case for the vanilla \nsga. We assume $1 \leq \lambda_t \leq \mu_t$ for all $t$.

\textbf{Key structural lemma}: The following key lemma generalizes Lemmas~1 and~7 from~\citep{ZhengLuiDoerrAAAI22} to arbitrary bi-objective functions within the \NSGADYN framework and, in a nutshell, states that promising solutions cannot be lost between generations. The proof idea is the same as in~\citep{ZhengLuiDoerrAAAI22}, as for every fitness vector $v$ there are at most four individuals in $R_t$ covering $v$ with crowding distance larger zero.

\begin{restatable}{lemma}{structural}
\label{lem:first-ranked}
Consider \NSGADYN optimizing a bi-objective function $f: \{0,1\}^n \to \mathbb{R}^2$ with any function $h_t$, any positive threshold $r \geq 4 |S|$ for $g_t$ and population size $\mu_{t+1} \geq \min\{4 |S_t|, \mu_t + \lambda_t\}$. Let $v$ be covered by a first ranked individual $x \in F_{t+1}^1$. Then there is $y \in P_{t+1}$ with $f(y)=v$.    
\end{restatable}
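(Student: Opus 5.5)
We follow the argument of Lemmas~1 and~7 in~\citep{ZhengLuiDoerrAAAI22}, adapting it to the dynamic population size $\mu_{t+1}$. There are two cases, depending on whether the first layer $F^1_{t+1}$ fits into the next population. If $|F^1_{t+1}| < \mu_{t+1}$, the critical rank satisfies $i^* \geq 2$. Then all of $F^1_{t+1}$ is taken into $P_{t+1}$, and $x$ itself is the desired $y$. So we may assume $|F^1_{t+1}| \geq \mu_{t+1}$, hence $i^*=1$. Then $P_{t+1}$ consists of the $\mu_{t+1}$ individuals of $F^1_{t+1}$ with largest crowding distance, with ties broken randomly. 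If $\mu_{t+1} = \mu_t+\lambda_t = |R_t|$, everything survives and we are done again. By the assumption $\mu_{t+1} \geq \min\{4|S_t|, \mu_t+\lambda_t\}$, it remains to treat $\mu_{t+1} \geq 4|S_t|$.

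The core step is a counting argument on crowding distances inside $M := F^1_{t+1}$. All elements of $M$ are mutually non-dominated. Hence the set $V$ of distinct fitness vectors attained in $M$ consists of mutually incomparable vectors. Choosing one representative per vector gives a set of mutually incomparable non-dominated solutions of $R_t$, so $|V| \leq |S_t|$. In the bi-objective case, sorting $M$ by $f_1$ in descending order places individuals with equal fitness vectors in consecutive blocks, and by incomparability the order in $f_2$ is the reverse of the order in $f_1$, up to ties. We may assume the sorting in $f_2$ is chosen compatibly. For each vector $w \in V$, we single out the first and the last individual of its block in the $f_1$-sorting, and likewise in the $f_2$-sorting. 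This gives at most four individuals per $w$, and every other individual of the block has both of its $f_1$-neighbours and both of its $f_2$-neighbours inside the block. Such an individual therefore has crowding distance exactly $0$. Conversely, we need a representative of $w$ with positive crowding distance. If $|V|=1$, every individual has distance $0$ or $\infty$, and the endpoints of the sorting have $\infty$. If $|V| \geq 2$, the first individual of the $w$-block in the $f_1$-sorting either has crowding distance $\infty$ (it is the extreme point) or has a predecessor with strictly larger $f_1$-value. In the latter case its $\cdist_1$ is positive, because the normalizing denominator $f_1(x_{1_1})-f_1(x_{1_{|M|}})$ is positive when $|V| \ge 2$.

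Thus the number of individuals of $M$ with positive crowding distance is at most $4|V| \leq 4|S_t| \leq \mu_{t+1}$. All of them are strictly preferred over the zero-distance individuals, so all of them are selected, whatever way ties are broken. In particular, the representative of $v = f(x)$ with positive crowding distance survives into $P_{t+1}$, which gives $y$.

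The main obstacle is making the "at most four positive-distance individuals per fitness vector" claim rigorous. Two issues need care. First, the sorting in the two objectives must be consistent, in the sense that equal fitness vectors form contiguous blocks in both sortings. Second, the degenerate case of a zero denominator in~\eqref{eq:crowd-dist-eachdim} has to be handled. For the first issue, we use that incomparable vectors have $f_1$ and $f_2$ ordered oppositely, so $f_1$-ties within $M$ occur only between individuals with identical vectors. Hence the blocks are contiguous regardless of how the sorting breaks ties. The degenerate case is covered by the $|V|=1$ argument above.
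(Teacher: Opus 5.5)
Your proof is correct and follows the paper's argument. The paper also splits on whether everything survives. It then shows that at most four individuals per fitness vector (the first and last of its block in each of the two sortings) can have positive crowding distance, and takes as positive-distance representative the first individual of the block in the $f_1$-sorting. You are more careful than the paper on four points: contiguity of equal-fitness blocks under arbitrary tie-breaking, the zero-denominator case, the case $|F^1_{t+1}|<\mu_{t+1}$, and the case $\mu_{t+1}>4|S_t|$. Your aside that the $f_2$-sorting may be ``chosen compatibly'' is unjustified, since tie-breaking is not yours to choose. It is also unused, because your four-per-vector count holds for any tie-breaking.
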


\section{GSEMO and \NSGADYN Optimize \CLIMB in $O(n \ln(n))$ Time}
In this section, we derive an upper bound on the expected runtime of GSEMO and \NSGADYN when optimizing \CLIMB. Unlike vanilla NSGA-II, these algorithms allow for a variable population size. One can show that, as long as the Pareto front has not yet been reached, the population size in both algorithms remains small enough for hill climbing to be effective. However, before hill climbing can begin, an individual with nonzero fitness must be created, which occurs with overwhelming probability during initialization. Moreover, we can establish a lower bound on the number of ones in the first half of any successfully initialized individual. The main analytical tool for these results is Chernoff bounds, which are applicable because initialization is performed uniformly at random. Denote by $S$ a maximum set of mutually incomparable solutions with respect to \CLIMB, and in case of \NSGADYN, denote by $|S_t|$ a maximum set of mutually incomparable, non-dominated solutions from $R_t$ with respect to \CLIMB.

\begin{restatable}{lemma}{initializationsuccess}
\label{lem:initialization-success}
Consider GSEMO or \NSGADYN optimizing \CLIMB with $\mu_0 = \text{poly}(n)$, any $g_t$ and any $h_t$, and any threshold $r \geq 4|S|$. Then all initialized individuals $x$ satisfy $19n/80 \leq \ones{x^1} \leq 21n/80$ and $3n/16 \leq \ones{x^2} \leq 5n/16$ with probability $1-e^{-\Omega(n)}$. In this case, $f(x) \neq 0$.
\end{restatable}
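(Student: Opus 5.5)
The plan is to apply a Chernoff bound to each initialized individual separately and then take a union bound over the polynomially many individuals. For a single uniformly random $x \in \{0,1\}^n$, the quantities $\ones{x^1}$ and $\ones{x^2}$ are each binomially distributed with parameters $n/2$ and $1/2$, so each has expectation $n/4 = 20n/80$.

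The event $19n/80 \leq \ones{x^1} \leq 21n/80$ is a deviation of at most $n/80 = \delta \cdot n/4$ with $\delta = 1/20$. The two-sided multiplicative Chernoff bound gives
\[
\Pr\bigl(|\ones{x^1} - n/4| > \delta n/4\bigr) \leq 2e^{-\delta^2 (n/4)/3} = e^{-\Omega(n)}.
\]
Similarly, $3n/16 \leq \ones{x^2} \leq 5n/16$ allows a deviation of $n/16$, i.e. $\delta = 1/4$, which also fails with probability $e^{-\Omega(n)}$.

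For GSEMO only one individual is initialized. For \NSGADYN there are $\mu_0 = \text{poly}(n)$ individuals. A union bound over at most $2\mu_0$ bad events gives total failure probability $\text{poly}(n) e^{-\Omega(n)} = e^{-\Omega(n)}$. The choices of $g_t$, $h_t$ and $r$ play no role, since only the initialization is involved.

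It remains to show $f(x) \neq 0$ under these bounds, i.e. that the condition of Definition~\ref{def:CLIMB} holds. Since $\frac12 - \frac{2}{\sqrt n} < \frac12$ and $\ones{x^1} \leq 21n/80$, we get
\[
\Bigl(\tfrac12-\tfrac{2}{\sqrt n}\Bigr)\ones{x^1} \leq \tfrac{21n}{160} < \tfrac{3n}{16} = \tfrac{30n}{160} \leq \ones{x^2}.
\]
For the upper constraint,
\[
\tfrac n2 - \Bigl(\tfrac12-\tfrac{2}{\sqrt n}\Bigr)\ones{x^1} \geq \tfrac n2 - \tfrac{21n}{160} = \tfrac{59n}{160} > \tfrac{50n}{160} = \tfrac{5n}{16} \geq \ones{x^2}.
\]
Hence the first case of the definition applies. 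There, $f_1(x) \geq \ones{x^2} \geq 3n/16 > 0$, so $f(x) \neq (0,0)$. The lower bound $\ones{x^1} \geq 19n/80$ is not needed for this step; it is only recorded for later use.

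There is no real obstacle. The only points needing care are choosing the Chernoff constants correctly and checking that the polynomial population size is absorbed by the union bound.
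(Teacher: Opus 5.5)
Your proof is correct and follows essentially the same route as the paper. Both use a Chernoff bound for each half around the mean $n/4$, a union bound over the $\mu_0=\text{poly}(n)$ initial individuals, and a direct check of the inequalities in Definition~\ref{def:CLIMB}; the only cosmetic difference is that the paper checks nonzero fitness on the larger range $\ones{x^1}\le 3n/8$, where the lower constraint holds with almost no slack, while you use $\ones{x^1}\le 21n/80$ directly and get constant-factor slack.
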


However, to prove the upper bound, we must also handle the rare case where all initial individuals have fitness zero. In this setting, the population exhibits a positive drift towards the region
$$W:=\{x \in \{0,1\}^n \mid n/4-\sqrt{n} \leq \ones{x^2} \leq n/4+\sqrt{n}\}$$
where all individuals have nonzero fitness. Individuals may start on either side of $W$, so we partition them accordingly. At initialization, we call an individual \emph{left} if $\ones{y^2} < n/4 - \sqrt{n}$ and \emph{right} if $\ones{y^2} > n/4 + \sqrt{n}$. Then, in generation $t \geq t_0$, call an individual in $R_t$ \emph{left}, if it is the offspring of a left individual and otherwise, if it is an offspring üüof a right individual, call it \emph{right}. Note that if all individuals have fitness zero in $P_0$, the population consists entirely of left and right individuals. We measure progress using the potential
$$\Phi(P_t):=\sum_{x \in P_t} H(x^2,x^*)=\sum_{x \in P_t}\sum_{i=1}^{n/2} |(x^2)_i-x_i^*|,$$
where $x^* = 0^{n/2}$ for left individuals and $x^* = 1^{n/2}$ for right ones. We then analyze the expected change of this potential over one generation. Note that this occurs on a completely flat fitness landscape, where there is no preference for which individuals are selected or retained. 

\begin{restatable}{lemma}{initializationpreparation}
\label{lem:initialization-preparation}
Consider a generation $t$ of GSEMO or \NSGADYN optimizing \CLIMB with population size $\mu_t=:\mu$, offspring population size $\lambda_t=:\lambda$ and $\mu_{t+1}=\mu_t=\mu$. Suppose that all individuals in $P_t$ have fitness zero. Then
\[
\expect{\Phi(P_{t+1}) \mid \Phi(P_t)} = \Phi(P_t) \left(1 - \frac{2 \lambda}{n(\mu + \lambda)} \right) + \frac{\lambda \mu}{2(\mu + \lambda)},
\]
where $\lambda=\mu=1$ in case of GSEMO.
\end{restatable}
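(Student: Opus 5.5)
Because every individual in $R_t$ has fitness zero, the landscape is completely flat. All individuals of $R_t$ lie in the same (first) layer, and their crowding distances are uninformative: all objective values coincide, so every $\cdist$ is either $\infty$ for the boundary positions in the sort or $0$. The plan is to argue that survival selection picks $\mu$ of the $\mu+\lambda$ individuals of $R_t$ in a way that is exchangeable, i.e.\ each $z\in R_t$ survives with probability $\mu/(\mu+\lambda)$ independently of its genotype. Here I use that ties are broken uniformly at random and that the sorting positions assigned infinite distance are also determined by random tie-breaking among equal fitness values. Linearity of expectation then gives
\[
\expect{\Phi(P_{t+1})\mid R_t}=\frac{\mu}{\mu+\lambda}\sum_{z\in R_t}H(z^2,x^*_z),
\]
where $x^*_z$ is the reference point ($0^{n/2}$ or $1^{n/2}$) of the side, left or right, that $z$ inherits from its parent.

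Next I compute the expected potential of a single offspring. Fix a parent $x$ with $k:=H(x^2,x^*)$. Under bitwise mutation with rate $1/n$, each of the $k$ mismatching positions in the second half is corrected with probability $1/n$, and each of the $n/2-k$ matching positions is spoiled with probability $1/n$. This gives
\[
\expect{H(x'^2,x^*)}=k-\frac{k}{n}+\frac{n/2-k}{n}=k\Bigl(1-\frac{2}{n}\Bigr)+\frac12 .
\]
Since the parent is chosen uniformly from $P_t$, each offspring has expected potential $\frac{\Phi(P_t)}{\mu}(1-\frac2n)+\frac12$. Summing over the $\lambda$ offspring and adding the $\mu$ parents, the conditional expectation of $\sum_{z\in R_t}H(z^2,x^*_z)$ given $P_t$ is
\[
\Phi(P_t)+\lambda\frac{\Phi(P_t)}{\mu}\Bigl(1-\frac2n\Bigr)+\frac{\lambda}{2}.
\]
Multiplying by $\mu/(\mu+\lambda)$ gives $\Phi(P_t)\frac{\mu+\lambda-2\lambda/n}{\mu+\lambda}+\frac{\lambda\mu}{2(\mu+\lambda)}$, which is exactly the claimed formula.

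For GSEMO ($\mu=\lambda=1$) I will argue directly. The offspring $s'$ has fitness $(0,0)$, which is not dominated by the parent's $(0,0)$, so it replaces the parent; alternatively it has nonzero fitness, which we must exclude or treat separately. Under the pure replacement dynamics, $\expect{\Phi(P_{t+1})}=\Phi(P_t)(1-2/n)+1/2$. This differs from the formula's value $\Phi(P_t)(1-1/n)+1/4$, which corresponds to keeping each of parent and child with probability $1/2$. So for GSEMO I will have to check the paper's intended interpretation. It is probably that the formula with $\mu=\lambda=1$ is meant only as the NSGA instance, or that uniform survival is assumed. I would state that this rereading is required.

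\textbf{Main obstacle.} I expect the main difficulty to be the symmetry claim in the first step: crowding-distance selection among identical fitness vectors must treat all individuals of $R_t$ exchangeably. I will justify it from random tie-breaking in both the sorting step and the final selection, and I will also handle the event that an offspring attains nonzero fitness. Conditioning on the hypothesis as stated (all of $P_{t+1}$'s candidates have fitness zero) or restricting attention to the flat region should sidestep that event.
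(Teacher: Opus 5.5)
For \NSGADYN your argument is the same as the paper's. You use the per-offspring identity $k(1-2/n)+1/2$, average over a uniformly chosen parent, sum over the $\lambda$ offspring and the $\mu$ parents, and then use that each member of $R_t$ survives with probability $\mu/(\mu+\lambda)$. The paper simply asserts that ``$\lambda$ individuals chosen uniformly at random are removed''; your exchangeability argument supplies the missing justification. You are right that it needs ties in the two crowding-distance sorts to be broken randomly and independently of genotype. Algorithm~1 does not state this. With, say, a stable sort listing $P_t$ before $Q_t$, the individuals with $\cdist=\infty$ would be fixed by position, and the identity would fail.

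On GSEMO your objection is correct, and the error is in the paper's proof. It gets the GSEMO case by substituting $\mu=\lambda=1$ into a derivation that presupposes uniform removal, but Algorithm~2 does not remove uniformly. The parent has fitness $(0,0)$ and all objective values lie in $\mathbb{N}_0^2$, so $s'$ is never dominated by the parent and always weakly dominates it. Hence $s'$ replaces the parent in every case, including $f(s')\neq 0$; no case distinction is needed. This gives exactly $\expect{\Phi(P_{t+1})\mid\Phi(P_t)}=\Phi(P_t)(1-2/n)+1/2$, not $\Phi(P_t)(1-1/n)+1/4$.

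So do not reinterpret GSEMO as having uniform survival. The GSEMO part of the statement is false as written and should be replaced by this formula. This costs nothing downstream: the true drift $1/2-2\Phi(P_t)/n$ is twice the claimed one and exceeds $2/\sqrt{n}$ whenever $\Phi(P_t)<n/4-\sqrt{n}$, so Lemma~\ref{lem:initialization-drift} still goes through for GSEMO.

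For offspring with nonzero fitness in \NSGADYN, use your second option, not the first. Conditioning on all offspring having fitness zero changes the mutation law, since a parent near $W$ is conditioned not to step into it. The per-offspring identity is then no longer exact. Instead, prove the formula for the flat process in which every search point is treated as having fitness $(0,0)$, where your computation is exact. This process coincides with the real run until the first nonzero-fitness individual appears, which is exactly the stopping time used in Lemma~\ref{lem:initialization-drift}. The paper makes this reduction only implicitly, through its remark about a completely flat fitness landscape.
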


Surprisingly, $\Phi(P_{t+1})$ depends only on $\Phi(P_t)$, the population size $\mu$, and the offspring population size $\lambda$, but not on the exact genotypes of the $\mu$ individuals in $P_t$, in particular not on their distribution in the search space. We can interpret the change $\Phi(P_{t+1}) - \Phi(P_t)$ as drift as long as no individual has entered the region $W$, and use the additive drift theorem~\citep{Jun2004} to bound the expected number of fitness evaluations until an individual enters $W$, which has then nonzero fitness. 

\begin{restatable}{lemma}{initializationdrift}
\label{lem:initialization-drift}
Consider GSEMO or \NSGADYN with population size $\mu_t=\mu$ and offspring population size $\lambda_t=\lambda$ for all $t$ optimizing \CLIMB, and assume that all individuals in $P_t$ have fitness zero. Then the expected number of fitness evaluations required to generate an individual with nonzero fitness is $O(\mu^2 n \sqrt{n})$ where $\lambda=\mu=1$ in case of GSEMO.
\end{restatable}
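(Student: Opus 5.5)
We intend to apply the additive drift theorem~\citep{Jun2004} to a shifted version of the potential $\Phi$ from Lemma~\ref{lem:initialization-preparation}, stopped at the first time $T$ at which some individual enters $W$ (or, more generally, attains nonzero fitness). As long as $t<T$, all individuals in $P_t$ have fitness zero. The population size is $\mu$, so Lemma~\ref{lem:initialization-preparation} applies in every generation before $T$. It gives
\[
\expect{\Phi(P_t)-\Phi(P_{t+1}) \mid \Phi(P_t)} = \frac{\lambda}{\mu+\lambda}\left(\frac{2\Phi(P_t)}{n} - \frac{\mu}{2}\right).
\]

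\textbf{The drift is positive.} Before $T$, every left individual $x$ satisfies $\ones{x^2} < n/4-\sqrt{n}$, so $H(x^2,0^{n/2}) < n/4-\sqrt{n}$. Symmetrically, every right individual $x$ satisfies $H(x^2,1^{n/2}) = n/2-\ones{x^2} < n/4-\sqrt{n}$. One caveat: a left individual might jump over $W$ to the right side in one step. In that case it still has zero fitness, so we do not stop. To handle this, we redefine $T$ as the first time some individual has distance at most $n/4-\sqrt{n}$ from the far target, i.e.\ is no longer strictly on its own side. A jump over $W$ requires flipping more than $2\sqrt{n}$ bits, so it is very unlikely, and stopping at such a jump only makes $T$ earlier. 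The exact event of creating a nonzero-fitness individual is then handled as follows: the first individual to leave its side must either land in $W$ or have made such a large jump. The second happens with probability $n^{-\Omega(\sqrt n)}$, and on that event we restart the argument with the same bound.

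Up to time $T$, then, $\Phi(P_t) \le \mu(n/4-\sqrt{n})$. This gives
\[
\frac{2\Phi(P_t)}{n}-\frac{\mu}{2} \le -\frac{2\mu}{\sqrt{n}},
\]
so $\Phi$ decreases... wait, the sign shows the opposite: $\Phi$ drifts \emph{upward}. Indeed, progress towards $W$ means $H$ grows towards $n/4-\sqrt n$. So we take the potential $\Psi(P_t) := \mu(n/4-\sqrt{n}) - \Phi(P_t) \ge 0$, which has drift at least $\frac{\lambda}{\mu+\lambda}\cdot \frac{2\mu}{\sqrt{n}}$ per generation. Since $\lambda \le \mu$, this is at least $\lambda/\sqrt{n}$. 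When any individual enters $W$, its distance exceeds the threshold, although $\Psi$ need not be zero then. So we use the variant of the drift argument in which $T$ is the hitting time of the event ``some individual has distance $\ge n/4-\sqrt n$''. This event is implied by $\Psi \le 0$ (an average argument), so $T \le$ the hitting time of $\Psi\le 0$.

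\textbf{Counting evaluations.} Initially $\Psi \le \mu n/4$. The additive drift theorem then gives an expected number of generations of at most $(\mu n/4)/(\lambda/\sqrt n) = O(\mu n\sqrt n/\lambda)$. Each generation costs $\lambda$ evaluations, which yields $O(\mu n \sqrt n)$ evaluations, within the claimed $O(\mu^2 n\sqrt n)$. For GSEMO, with $\mu=\lambda=1$, this gives $O(n\sqrt n)$.

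The main obstacle is the bookkeeping around $T$. We must ensure that $\Psi$ reaching $0$ really forces an individual into $W$ (rather than overshooting), and that the left/right labels stay consistent after individuals cross sides. The potentially weaker dependence on $\mu$ that the statement allows, $\mu^2$ instead of $\mu$, gives slack. If the averaging argument fails, we would fall back to bounding a single individual's trajectory at a cost of a factor $\mu$.
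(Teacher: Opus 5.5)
Your argument is essentially the paper's: additive drift on the gap $\mu(n/4-\sqrt n)-\Phi(P_t)$ via Lemma~\ref{lem:initialization-preparation}, stopping when some individual reaches $H(x^2,x^*)\ge n/4-\sqrt n$, and treating a jump across $W$ as a failure of probability $n^{-\Omega(\sqrt n)}$ followed by a restart. The only difference is that you keep the factor $\lambda$ in the drift bound $2\lambda\mu/((\mu+\lambda)\sqrt n)\ge\lambda/\sqrt n$, where the paper uses the cruder $\delta=1/\sqrt n$; this cancels the $\lambda\le\mu$ evaluations per generation and gives the slightly sharper $O(\mu n\sqrt n)$. (Your ``far target'' sentence is garbled, but your later definition of $T$ via $H(x^2,x^*)\ge n/4-\sqrt n$ is the correct one.)
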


After initialization, and after possibly adjusting the population to ensure that at least one individual has nonzero fitness, we aim to reach a Pareto-optimal solution via hill climbing. To achieve this, we exploit the strong fitness signal given by the number of ones in the first half of the bit string, gradually increasing this value while avoiding the creation of individuals with zero fitness. In the following lemma, we analyze the probability of increasing the maximum number of ones in the first half only through mutation, without generating an individual with zero fitness.

\begin{restatable}{lemma}{onesincrease}
\label{lem:maximum-ones-increase}
Consider a generation $t$ of GSEMO or \NSGADYN for any choice of $g$ and $h$ when optimizing \CLIMB, and any threshold $r \geq 0$. Suppose that $\delta_t:=\sup\{i \in \{0, \ldots , n/2\} \mid \text{there is $x \in P_t$ with $\ones{x^1}=i$ and $f(x) \neq 0$}\} \in \{0, \ldots , n/2-1\}$ which means that there is $x \in P_t$ with fitness distinct from zero, but no Pareto optimal search point has been found yet. If an individual $x \in P_t$ with $\ones{x^1} = \delta_t$ is chosen as parent, then $x$ mutates to a $y$ with $\ones{y^1}>\ones{x^1}$ and $f(y) \neq 0$ with probability at least $(n/2-\delta_t)/(4en)$. Thus, $\delta_t$ is increased if $\mu_{t+1} \geq \min \{\mu_t + \lambda_t,4|S_t|\}$. 
\end{restatable}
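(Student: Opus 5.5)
The plan is to exhibit one explicit family of mutations that improves the first half without leaving the nonzero-fitness region. Then I lower-bound its probability, and finally invoke Lemma~\ref{lem:first-ranked} for the claim that $\delta_t$ increases.

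Let $x \in P_t$ with $\ones{x^1}=\delta:=\delta_t<n/2$ and $f(x)\neq 0$, and write $k:=\ones{x^2}$. Then $a\delta \le k \le n/2-a\delta$ with $a:=\frac12-\frac{2}{\sqrt n}$. Increasing $\ones{x^1}$ by one tightens both constraints by $a<1/2$. Flipping a single zero in $x^1$ therefore keeps nonzero fitness unless $k$ is within $a$ of one of the boundaries, i.e. $k < a(\delta+1)$ or $k> n/2-a(\delta+1)$.

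I split into cases. \textbf{Case 1} is $a(\delta+1)\le k\le n/2-a(\delta+1)$. Here it suffices that exactly one of the $n/2-\delta$ zero bits in $x^1$ flips and nothing else. This has probability $(n/2-\delta)\frac1n(1-\frac1n)^{n-1}\ge (n/2-\delta)/(en)$.

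\textbf{Case 2} is $k<a(\delta+1)$ (the other boundary is symmetric, swapping zeros and ones in $x^2$). We flip one zero of $x^1$ and simultaneously one zero of $x^2$, and nothing else. The new values $\delta+1$ and $k+1$ satisfy $k+1\ge a\delta+1\ge a(\delta+1)$. For the upper constraint, $k+1\le a(\delta+1)+1\le n/2-a(\delta+1)$ requires $2a(\delta+1)+1\le n/2$, which holds since $\delta+1\le n/2$ gives $2a(\delta+1)\le (1-4/\sqrt n)n/2 = n/2-2\sqrt n$. The number of zeros in $x^2$ is $n/2-k\ge n/2-a(\delta+1)\ge n/4$. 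So this event has probability at least $(n/2-\delta)\cdot\frac{n}{4}\cdot\frac1{n^2}(1-\frac1n)^{n-2}\ge (n/2-\delta)/(4en)$. This case is the main obstacle: one must check that the boundary strip has width less than one, so that a single compensating flip in $x^2$ suffices, and that enough compensating bits exist. Both cases give at least $(n/2-\delta_t)/(4en)$, and the same argument applies verbatim to GSEMO.

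For the last claim, let $y$ be such an offspring. Every solution in $R_t$ with nonzero fitness has $\ones{\cdot^1}\le\delta_t$ apart from newly created ones. Since $f_1,f_2$ both contain $n\ones{y^1}$, which dominates the second-half contribution (at most $n/2<n$), any $z$ with $\ones{z^1}<\ones{y^1}$ has $f_i(z)<f_i(y)$. Thus $y$, or some offspring with an even larger first-half count and nonzero fitness, lies in $F^1_{t+1}$. Lemma~\ref{lem:first-ranked} with $\mu_{t+1}\ge\min\{\mu_t+\lambda_t,4|S_t|\}$ then guarantees that a point with the same fitness survives into $P_{t+1}$, so $\delta_{t+1}>\delta_t$. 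For GSEMO, $y$ is not dominated by $P_t$ and is accepted directly.
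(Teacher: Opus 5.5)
Your proof is correct and follows essentially the same route as the paper: pair a zero-to-one flip in $x^1$ with a compensating flip in $x^2$ (for which at least $n/4$ candidate bits exist), verify the band condition for nonzero fitness, and then invoke Lemma~\ref{lem:first-ranked}. The paper avoids your case split by always flipping the second-half bit toward $n/4$, which works uniformly because the band has slack $\sqrt{n}$ around $n/4$, whereas you add the extra flip only in the boundary strip. Your survival step is also more explicit than the paper's, since you show why $y$ or a point with an even larger first-half count is first-ranked before applying the lemma.
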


However, GSEMO and \NSGADYN behave quite differently during the hill-climbing phase. In GSEMO, the population contains only non-dominated solutions, so a parent with $\ones{x} = \delta_t$ is always selected, and each generation produces a single offspring. In contrast, \NSGADYN generates $\lambda_t$ offspring per generation and may also retain dominated solutions. This can result in $\Omega(n)$ non-dominated individuals among the first-ranked solutions, and thus a population of the same order. Such growth slows down hill climbing, since a single generation then already requires $\Omega(n)$ fitness evaluations. Controlling the population size is therefore crucial in the analysis of \NSGADYN on \CLIMB, as shown in the following lemma.

\begin{restatable}{lemma}{expectedpopulationsize}
\label{lem:expected-population-bounded-NSGADYN}
Consider \NSGADYN optimizing \CLIMB with population sizes $\mu_0=1$ and $\mu_{t+1}=\min\{\mu_t+\lambda_t,4|S_t|\}$, where $1 \leq \lambda_t \leq \mu_t$ is the offspring population size, and any threshold $r \geq 4|S|$. For $i \in \{0, \ldots , n/2-1\}$ define $p_i := (n/2-i)/(4en)$. Suppose that $x \in P_0$ has fitness distinct from zero, and satisfies $\ones{x^1} \geq 19n/80$. Let $\mu_t^{(i)}$ denote the maximum population size occurring in a generation $t$ when $\delta_t = i$ holds. Then $\expect{\mu_t^{(i)}} \leq 320/p_i$.
\end{restatable}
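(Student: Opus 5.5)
The plan is to compare the growth of the population size during the phase $\delta_t=i$ with the time this phase lasts. Since $\mu_{t+1}\le \mu_t+\lambda_t\le 2\mu_t$, the population size can at most double per generation. So $\mu^{(i)}_t$ is at most $\mu_{\mathrm{start}}^{(i)}\cdot 2^{T_i}$, where $\mu_{\mathrm{start}}^{(i)}$ is the size when $\delta$ first reaches $i$ and $T_i$ is the number of generations spent with $\delta_t=i$. Two facts limit this growth.

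The first fact is that the phase is short in terms of generations. By Lemma~\ref{lem:first-ranked}, an individual $x$ with $\ones{x^1}=\delta_t$ and nonzero fitness is first ranked, because nothing with fewer ones in the first half can dominate it. Its fitness vector therefore survives, so $\delta_t$ never decreases.

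In a generation with population size $\mu$ and $\lambda\ge1$ offspring, the number of individuals in $P_t$ with $\ones{x^1}=\delta_t$ matters. I will argue these individuals make up a constant fraction of the first-ranked set. Combined with Lemma~\ref{lem:maximum-ones-increase}, each offspring then has probability $\Omega(p_i\cdot k/\mu)$ to increase $\delta_t$, where $k$ is the number of top individuals. The crude alternative uses only one top individual: each of the $\lambda_t$ offspring picks it with probability $1/\mu_t$, so a generation succeeds with probability at least $1-(1-p_i/\mu_t)^{\lambda_t}$.

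The second fact is that $\mu_t$ is capped by $4|S_t|$, and $|S_t|$ counts mutually incomparable first-ranked points in $R_t$. All first-ranked points with nonzero fitness share $\ones{x^1}=\delta_t$; otherwise the one with more ones would dominate. Hence $|S_t|$ is at most the number of distinct values $\ones{x^2}$ among top individuals, so $\mu_t\le 4(D_t+1)$, where $D_t$ is that number of distinct values. The new distinct values must be created by mutation from top individuals at rate roughly $\lambda_t/\mu_t\le 1$ per step. So I would bound the growth of $D_t$ by a random walk that increases by at most a constant expected amount per new top offspring. Meanwhile, each newly created top offspring, or each top parent selection, independently yields an improvement with probability about $p_i$.

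The main step is to couple these two facts. Each top-parent selection in the phase has chance $\ge p_i$ (Lemma~\ref{lem:maximum-ones-increase}) to end the phase. The number $N$ of top-parent selections before the phase ends is thus dominated by a geometric variable with mean $1/p_i$. The spread $D_t$ increases by at most one per top-parent selection, so $D\le D_{\mathrm{start}}+N$ and $\mu^{(i)}\le 4(D_{\mathrm{start}}+N+1)$. Taking expectations gives $4(\expect{D_{\mathrm{start}}}+1/p_i+1)$.

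For $\expect{D_{\mathrm{start}}}$, note that the first top individual at level $i$ is created alone. Further top individuals within the same generation are few, at most about $\lambda_t$ times a mutation probability. Using $1/p_i\ge 2e\cdot 4/1\ge 8$, which holds since $p_i\le 1/(8e)$, all constant terms are absorbed into $c/p_i$.

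I expect the main obstacle to be making the coupling rigorous. The difficulty is that selections are of random parents from a population containing dominated individuals, so the per-selection success probability is conditional on the history. My first attempt would be to bound the increments per offspring by a filtration argument, in which each offspring of a top parent is a trial with success probability at least $p_i$. The resulting constant (e.g.\ 320) comes from being generous with factors 4 (the cap), 2 (doubling), and small-constant initial-spread terms; the assumption $\ones{x^1}\ge 19n/80$ only guarantees $i\ge 19n/80$, hence $p_i\le 1/(8e)\cdot(1-\ldots)$ stays in the relevant regime.
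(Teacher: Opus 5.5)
Your decomposition is close to the paper's. It uses the cap $\mu_{t+1}\le 4|S_t|$, the fact that all first-ranked points lie on level $\delta_t$, and a geometric waiting time for the next improvement. The genuine gap is bounding $\expect{D_{\mathrm{start}}}$, which you treat as a constant absorbed into $c/p_i$; this step is the heart of the lemma. $D_{\mathrm{start}}$ is essentially the number of offspring that reach level $i$ in the transition generation. That generation creates $\lambda_t\le\mu_t$ offspring, where $\mu_t$ is a population size from the phase $\delta=i-1$ and may be of order $1/p_{i-1}$. So ``$\lambda_t$ times a mutation probability'' is not a constant unless you already control $\lambda_t$, and that is the lemma itself one level down.

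The paper closes this with an induction on $i$. The hypothesis $\expect{\mu^{(i)}}\le 320/p_i$ bounds $\lambda$ in the transition generation. An auxiliary claim shows that any parent with $\ones{x^1}\ge 19n/80$ produces an offspring with more first-half ones with probability at most $19/80$. Hence at most $19\lambda/80$ offspring land on the new level in expectation. The cap then shrinks the population to at most $76\lambda/80$, giving $\expect{\mu^{(i+1)}}\le \tfrac{76}{80}\cdot\tfrac{320}{p_i}+\tfrac{4}{p_{i+1}}\le \tfrac{320}{p_{i+1}}$.

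The contraction factor $4\cdot\tfrac{19}{80}<1$ is essential. The naive union bound $(n/2-i)/n$ gives a factor above $1$ for $19n/80\le i<n/4$, and there a recursion of the form $C/p_i$ does not close. This is where the hypothesis $\ones{x^1}\ge 19n/80$ is really used, through a symmetry argument on fixed one- and zero-positions. It is not there merely to keep $p_i$ small.

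A second, fixable error: $D_t$ does not grow by at most one per top-parent selection. \NSGADYN keeps dominated individuals, for instance whenever $\mu_{t+1}=\mu_t+\lambda_t$. A lower-level parent can then jump to level $i$ with a fresh value of $\ones{x^2}$. The fix is to count all offspring. Every first-ranked fitness vector survives and $\mu_t\le 4|S_{t-1}|$, so at least a quarter of $P_t$ lies on level $\delta_t$. Each offspring therefore ends the phase with probability at least $p_i/4$. The offspring created in the phase's non-final generations are each a failure and raise the population size by at most one. Their number is dominated by a geometric variable with mean $4/p_i$, which is the paper's additive term. The doubling bound $\mu^{(i)}\le\mu^{(i)}_{\mathrm{start}}2^{T_i}$ is exponential and plays no role, so drop it.
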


Now we are ready to prove the main result in this section, an upper bound for the time that GSEMO and \NSGADYN cover the whole Pareto front of \CLIMB, by combining all the previous lemmas.

\begin{theorem}
\label{thm:positive-result}
Consider GSEMO and \NSGADYN optimizing $f:=$\CLIMB, where for \NSGADYN, $\mu_0=1$, $\mu_{t+1} = \min\{\mu_t+\lambda_t, 4|S_t|\}$, $1 \leq \lambda_t \leq \mu_t$ and any positive threshold $r \geq 4|S|$. Then both algorithms need at most $O(n \ln(n))$ fitness evaluations in expectation to cover the whole Pareto front of $f:=\CLIMB$.
\end{theorem}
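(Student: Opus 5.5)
We split the run into three phases: initialization, hill climbing until the first Pareto-optimal point appears, and spreading along the Pareto front. We bound the expected number of fitness evaluations in each phase separately and add them up.

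\textbf{Phase 1 (initialization).} By Lemma~\ref{lem:initialization-success}, with probability $1-e^{-\Omega(n)}$ every initial individual $x$ has $f(x)\neq 0$ and $\ones{x^1}\ge 19n/80$. For GSEMO, $\mu_0=1$ trivially. For \NSGADYN we also have $\mu_0=1$, and as long as all fitness values are zero, $|S_t|=1$, so $\mu_t\le 4$ and $\lambda_t\le 4$. Lemma~\ref{lem:initialization-drift} therefore bounds the expected cost of the failure case by $O(n^{1.5})$. Weighted by the probability $e^{-\Omega(n)}$, this contributes $o(1)$.

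One caveat is that after failing we may reach nonzero fitness with fewer than $19n/80$ ones in the first half. In that case we invoke the crude version of the hill-climbing bound below, starting from $\delta=0$. That bound is $\mathrm{poly}(n)$, since the population is at most $4|S|\le 2n+4$ and Lemma~\ref{lem:maximum-ones-increase} gives success probability $\Omega(1/n)$ per offspring. So this case also contributes $o(1)$.

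\textbf{Phase 2 (hill climbing).} By Lemma~\ref{lem:first-ranked}, an individual attaining $\delta_t$ is first-ranked and is never lost, so $\delta_t$ is nondecreasing.

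For GSEMO, every population member is non-dominated. Since $n\ones{x^1}$ dominates the second-half term, all members with nonzero fitness share the same $\ones{x^1}=\delta_t$, and the population has size $O(n)$. Uniform selection picks such a member with probability at least $1/|P_t|$. Here I would rather argue that the population size during level $i$ is at most the number of distinct second-half values, so the level costs $O(|P_t|/p_i)$. I expect this to be delicate, and an analogue of Lemma~\ref{lem:expected-population-bounded-NSGADYN} for GSEMO should be the cleanest route. Its key point is that the population grows only by one per successful spread step, so the growth is slower than the escape rate $p_i$. This gives expected population $O(1/p_i)$ on level $i$, and hence $O(1/p_i)$ expected evaluations for the level, because one generation of GSEMO is one evaluation.

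For \NSGADYN, each generation uses $\lambda_t\le\mu_t$ evaluations. Each offspring independently picks a parent at level $\delta_t$ with probability at least $1/\mu_t$, and then succeeds with probability $p_i$ by Lemma~\ref{lem:maximum-ones-increase}. A generation therefore succeeds with probability at least $1-(1-p_i/\mu_t)^{\lambda_t}\ge \lambda_t p_i/(2\mu_t)$. The expected number of evaluations spent on level $i$ is then about $\sum \lambda_t$ over its generations, which we bound by $O(\expect{\mu^{(i)}_t}/p_i)$. This is the step I expect to be the main obstacle, because a bound of $320/p_i$ on the population would only give $O(1/p_i^2)$. To avoid this, I would charge evaluations against successes directly. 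Each evaluation succeeds with probability at least $p_i/\mu^{(i)}$, but the population must also grow from $\mu_t$ to $\mu^{(i)}$, and this growth itself takes evaluations. I would try to use Lemma~\ref{lem:expected-population-bounded-NSGADYN} together with a Wald-type argument: the number of evaluations at level $i$ is dominated by a geometric number with mean $\expect{\mu^{(i)}}/p_i$, amortized so that the sum telescopes.

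Summing over the levels starting at $i\ge 19n/80$, where $1/p_i=4en/(n/2-i)$, gives
\[\sum_i \frac{1}{p_i}=4en\sum_{k=1}^{n/2}\frac1k=O(n\ln n).\]

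\textbf{Phase 3 (spreading).} Once a Pareto-optimal point exists, Lemma~\ref{lem:first-ranked} protects every covered front value. For GSEMO the population has size at most $2\sqrt n+1$. Covering a neighbouring value requires selecting an extremal front point, with probability at least $1/(2\sqrt n+1)$, and flipping one appropriate second-half bit among at least $n/4-\sqrt n$ candidates, with probability $\Omega(1)$. The $2\sqrt n$ values thus cost $O(\sqrt n\cdot\sqrt n)=O(n)$ evaluations.

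For \NSGADYN the population is at most $4(2\sqrt n+1)$ once all first-ranked points are on the front. Dominated leftovers can only inflate the population up to $\min\{4|S_t|,\cdot\}$, so it is $O(\sqrt n)$. Each evaluation extends the covered range with probability $\Omega(1/\sqrt n)$, which again gives $O(n)$ evaluations.

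Adding the three phases yields $O(n\ln n)$.
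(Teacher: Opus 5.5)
Your Phases 1 and 3 essentially match the paper. Phase 2, however, has a genuine gap, and it sits exactly where you flag the ``main obstacle''.

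\textbf{Where Phase 2 fails.} You lower-bound the probability of selecting a parent on level $\delta_t$ by $1/|P_t|$ for GSEMO and by $1/\mu_t$ for \NSGADYN. With that bound, the cost of level $i$ is of order $\mu^{(i)}/p_i$ evaluations. Lemma~\ref{lem:expected-population-bounded-NSGADYN} allows $\mu^{(i)}$ of order $1/p_i$, so the total becomes $\sum_i p_i^{-2}=\Theta(n^2)$, not $O(n\ln n)$. The proposed ``Wald-type argument, amortized so that the sum telescopes'' cannot repair this: it keeps the per-evaluation success probability $p_i/\mu^{(i)}$, and that dependence on the population size is the whole problem. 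Your GSEMO paragraph is also inconsistent. You correctly note that every member of a GSEMO population with nonzero fitness has $\ones{x^1}=\delta_t$, but then you still use $1/|P_t|$. Moreover, the step from ``population $O(1/p_i)$'' and ``cost $O(|P_t|/p_i)$'' to ``$O(1/p_i)$ evaluations'' is a non sequitur.

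\textbf{The missing idea.} The probability of picking a parent on level $\delta_t$ is bounded below by a constant, independently of the population size.

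\emph{GSEMO.} All members have nonzero fitness and lie on level $\delta_t$. Lemma~\ref{lem:maximum-ones-increase} therefore applies to whichever parent is chosen, so each evaluation succeeds with probability at least $p_i$. This gives $\sum_i 1/p_i=O(n\ln n)$ directly, with no population-size lemma needed.

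\emph{\NSGADYN.} The non-dominated points of $R_{t-1}$ are exactly its nonzero-fitness points on the top level $\delta_t$. By Lemma~\ref{lem:first-ranked}, each of their at least $|S_{t-1}|$ distinct fitness vectors survives into $P_t$. Together with $\mu_t\le 4|S_{t-1}|$, at least a quarter of $P_t$ lies on level $\delta_t$. Hence every offspring succeeds with probability at least $p_i/4$, and the waiting time is $4/p_i$ evaluations in expectation.

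Lemma~\ref{lem:expected-population-bounded-NSGADYN} is then needed only to bound the overshoot. This is the remaining $\lambda_t\le\mu_t$ offspring of the successful generation, which is $O(1/p_i)$ in expectation. So each level costs $O(1/p_i)$ and the total is $O(n\ln n)$. This is precisely the role of the cap $4|S_t|$ in the update rule.
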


\begin{proof}
We apply the method of typical runs~\cite[Section~5.6]{Jansen2013} and split the optimization process into three phases. Some phases may be skipped if the objective of a later phase is already met.
In the first phase, we estimate the expected number of fitness evaluations required until all individuals $x \in P_t$ have nonzero fitness and satisfy $\ones{x^1} \geq 19n/80$. In the second phase, we determine the expected time needed to obtain a Pareto-optimal solution. The third phase then focuses on covering the entire Pareto front. Call an individual $x \in P_t$ \emph{good} if $f(x) \neq 0$ and satisfies $\ones{x^1} \geq 19n/80$. Otherwise, call it \emph{bad}.

\textbf{Phase 1: All search points $x \in P_t$ are good.} Note for both GSEMO and \NSGADYN, $P_0$ consists only of one individual $x$. By Lemma~\ref{lem:initialization-success}, the probability that $x$ initializes as a good one is $1-e^{-\Omega(n)}$. Suppose that this does not happen. Then, we consider three subphases.

\emph{Subphase 1: Create a search point with fitness distinct from zero.}\\ 
\emph{Subphase 2: There is a good $x \in P_t$.}\\
\emph{Subphase 3: All $x \in P_t$ are good.}

\begin{restatable}{lemma}{subphases}
\label{lem:duration-subphases}
\NSGADYN completes Subphase~1 in expected $O(n \sqrt{n})$ fitness evaluations, Subphase~2 in expected $O(n^2)$ evaluations, and Subphase~3 in expected $O(n^3)$ evaluations. GSEMO also completes Subphase~1 in expected $O(n \sqrt{n})$ fitness evaluations, but Subphase~2 in expected $O(n)$ evaluations, and does not need to pass through Subphase~3 at all.
\end{restatable}
By combining all three subphases and considering that $\mu_0=1$ in case of \NSGADYN and $|P_0|=1$ in case of GSEMO, we conclude with Lemma~\ref{lem:duration-subphases} that the expected number of fitness evaluations required for both algorithms to complete Phase~1 is at most $1 + O(n \sqrt{n} + n^2+n^3)e^{-\Omega(n)} = 1 + o(1)$. 

\textbf{Phase 2: Create a Pareto optimal search point.} We show that both algorithms need $O(n \ln(n))$ fitness evaluations in expectation to finish this phase. Let $\delta_t$ be defined as in Subphase~2. Then $\delta_t \geq 19n/80$, and there is a Pareto optimal solution if $\delta_t=n/2$. Further, $\delta_t$ cannot decrease by Lemma~\ref{lem:first-ranked} in case of \NSGADYN and also not in case of GSEMO, since the latter keeps only non-dominated solutions. By Lemma~\ref{lem:maximum-ones-increase}, one can increase $\delta_t$ with probability at least $(n/2-\delta_t)/(4en)$ if an $x \in P_t$ with $\ones{x^1} = \delta_t$ is chosen as parent.

\emph{GSEMO:} Note that $P_t$ consists only of good individuals $x$ with $\ones{x^1} = \delta_t$ and hence, such an individual is chosen with probability one as parent. So increasing $\delta_t$ happens with probability at least $(n/2-\delta_t)/(4en)$ in one generation and the expected number of generations (coinciding with fitness evaluations) to finish this phase in total is at most 
$$\sum_{i=\lceil{19n/80}\rceil}^{n/2-1}\frac{1}{p_i} \leq \sum_{i=1}^{n/2-1} \frac{4en}{n/2-i} \leq \sum_{i=1}^{n/2} \frac{4en}{i} \leq 4en (\ln(n/2)+1) = O(n \ln(n))$$
where we used the harmonic sum $ \sum_{i=1}^k 1/i \leq \ln(k)+1$ for all $k \in \mathbb{N}$.

\emph{\NSGADYN:} Note that at least one quarter of all the individuals from $P_t$ are non-dominated. Therefore, an individual with value $i:=\delta_t$ is chosen with probability at least $1/4$ as parent. So we need $4/p_i$ fitness evaluations in expectation to create an individual $y$ with $\ones{y^1}>\delta_t$ and fitness distinct from zero. Now it remains to estimate the expected number of fitness evaluations to enter the next generation, to finally decrease $\delta_t$. Note that all $x \in P_t$ have fitness distinct from zero and satisfy $\ones{x^1} \geq 19n/80$. So we can apply Lemma~\ref{lem:expected-population-bounded-NSGADYN} and obtain for the maximum population size $\mu_t^{(i)}$ of all generations $t$ where $\delta_t = i$, that $\expect{\mu_t^{(i)}} \leq 320/p_i$. Note that, after $4/p_i$ fitness evaluations in expectation, we created a search point $y$ with $\ones{y^i}>\delta_t$, and $f(y) \neq 0$. At this time, the joint population $R_t$ is not larger than $320/p_i+4/p_i=324/p_i$ and hence, $\mu_t \leq 324/p_i$ in expectation. This implies that $\lambda_t \leq 324/p_i$ in expectation. So, after considering also the remaining offspring created in that generation, we need at most $4/p_i + \lambda_t \leq 4/p_i + 324/p_i = 328/p_i$ fitness evaluations in expectation.  Hence, after at most $328/p_i$ fitness evaluations in expectation, $\delta_t$ has been decreased. Since $\delta_t \geq \lceil{19n/40}\rceil$, the total number of expected fitness evaluations to reach the Pareto front is at most $\sum_{i=\lceil{19n/40}\rceil}^{n/2-1} 328/p_i \leq \sum_{i=1}^{n/2-1} 328/p_i = O(n \ln(n))$.

\textbf{Phase 3: Cover the whole Pareto front}: We show that both algorithms need $O(n)$ fitness evaluations in expectation to cover the Pareto front. Let $\mathcal{F}_t$ denote the set of all Pareto optimal individuals in the current population $P_t$. Note that $P_t \neq \emptyset$. Further, the population size of both algorithms is bounded by $O(\sqrt{n})$ from above since the maximum number of mutually incomparable Pareto optimal solutions is $2\sqrt{n}+1$. As long as the Pareto front is not fully covered, we always find an $x \in \mathcal{F}_t$ such that there is no Pareto optimal $y \in P_t$ with $\ones{y^2} = \ones{x^2}+1$ or no Pareto optimal $y \in P_t$ with $\ones{y^2} = \ones{x^2}-1$. Particularly, $f(y)$ is not covered by any $z \in P_t$. Note that $f(y)$ can be covered by choosing such an individual $x$ as parent (prob. at least $1/\mu_t$), flipping a zero bit to one in the second half if $\ones{y^2} = \ones{x^2}+1$ or flipping a one to zero in the second half if $\ones{y^2} = \ones{x^2}-1$, while not changing any other bit. All this together happens with probability at least $(n/4-\sqrt{n})/n \cdot (1-1/n)^{n-1} \geq 1/(5e)$ for $n$ sufficiently large.

\emph{GSEMO}: Since only one solution in each generation is created, the probability to obtain such a $y$ in one generation is at least $1/(5e |P_t|) \geq 1/(5e(2\sqrt{n}-1))$ for $n$ sufficiently large. Since there are at least $2\sqrt{n}+1$ fitness vectors to cover, GSEMO requires $5e(2\sqrt{n}-1) \cdot (2 \sqrt{n}+1) = O(n)$ fitness evaluations in expectation to cover the whole Pareto front for sufficiently large $n$.

\emph{\NSGADYN}: Note that $\mu_t \leq 4|S_t| \leq 8\sqrt{n}+4$. So the probability to create such a $y$ in one trial is at least $1/(5e \mu_t) \geq 1/(20e(2\sqrt{n}+1))$. Then, the expected number of fitness evaluations is at most $20e(2\sqrt{n}+1) + \lambda_t  \leq 20e(2\sqrt{n}+1) + \mu_t \leq 20e(2\sqrt{n}+1) + 2\sqrt{n}+2 = O(\sqrt{n})$, where $t$ denotes the generation when this happens. As there are at most $2\sqrt{n}$ such possible $y$, the whole Pareto front is covered in expected $O(n)$ fitness evaluations, concluding the proof of the whole theorem. 
\end{proof}

\section{The Vanilla NSGA-II Needs $\Omega(n^{1.5})$ Time for Optimizing \CLIMB} 

In this section, we show that the \nsga with fixed population size $\mu$ and offspring size $\lambda$ requires at least $\Omega(n^{1.5})$ fitness evaluations in expectation to optimize \CLIMB. The key difficulty is the fixed population size: $\mu$ must be at least as large as the Pareto front, forcing the algorithm to maintain a large population from the start. Consequently, many fitness evaluations are spent on individuals with low fitness, which slows down progress toward the Pareto front. To formalize this, we adapt the family tree argument from~\citep{WittLower}, similar to its use in~\citep{SUDHOLT20092511,TightOneMax}. We define a family forest consisting of $\mu$ family trees. Each node stores a triple: the individual, the generation $t$ in which it was created, and an index $i \in \{1,\ldots,\lambda\}$ indicating that the individual was created in the $i$-th iteration of the offspring loop (Line~5 of Algorithm~\ref{alg:nsga-ii}). Each node is linked to its parent. The forest is defined inductively. In generation $0$, it consists of $\mu$ single-node trees, one for each individual in $P_0$, with all components of the triple equal to $0$. Suppose the forest for $P_t$ is given. During generation $t$, whenever a parent $x$ produces an offspring $y$ in iteration $i$, we add a new node $(y, t+1, i)$ to the same tree as $x$ and connect it to $x$. After all offspring have been created, this yields the forest for $P_{t+1}$. The survival selection in generation $t+1$ does not alter the forest.

\begin{theorem}
\label{thm:negative-result}
For $2\sqrt{n}+1 \leq \mu =  O(\text{poly}(n))$ and $1 \leq \lambda \in O(\text{poly}(n))$ the \nsga with fixed population size $\mu$ and offspring population size $\lambda$ (if $g=\mu$ and $h=\lambda$ in \NSGADYN) needs at least $\Omega(\mu n)$ fitness evaluations in expectation to find a Pareto optimal point of $f:=\CLIMB$.
\end{theorem}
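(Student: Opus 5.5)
By Lemma~\ref{lem:initialization-success}, with probability $1-e^{-\Omega(n)}$ every initial individual $x$ satisfies $\ones{x^1}\le 21n/80$. In that case every individual has to travel at least $n/2-21n/80=19n/80$ in the number of ones of the first half before a Pareto optimal point can exist. The plan is a family tree argument in the style of~\citep{WittLower}. Any individual $z$ with $\ones{z^1}=n/2$ descends from some root $x\in P_0$ along a path in the family forest. Every mutation on that path flips each bit with probability $1/n$, and together they must produce a Hamming distance of at least $19n/80$ from the root. Whatever selection does, it cannot make individuals move faster along a path; it can only choose which nodes get children. So it suffices to show that, after $T:=c\mu n$ fitness evaluations for a small constant $c>0$, with probability $1-o(1)$ no node in the forest is at depth large enough to allow such a distance.

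\textbf{Step 1 (depth bound).} Let $D_T$ be the maximum depth of any node after $T$ evaluations. I plan to show that $D_T\le c' n$ with probability $1-o(1)$, where $c'<19/80$ is chosen suitably. Fix a potential path of length $d$ in the forest. In each generation, each offspring picks its parent uniformly from $P_t$, which has size $\mu$. A specific node at depth $j$ that is in $P_t$ therefore receives a specific child with probability $1/\mu$ per offspring slot. The standard counting bound~\citep{WittLower,SUDHOLT20092511} then says that the expected number of nodes at depth $d$ created within $T$ evaluations is at most $\mu\binom{T}{d}\mu^{-d}\le \mu (eT/(d\mu))^d=\mu(ec n/d)^d$. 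With $d=c'n$ and $c$ small enough that $ec/c'\le 1/2$, this is $\mu 2^{-c'n}=e^{-\Omega(n)}$, since $\mu$ is polynomial. Markov's inequality then bounds the probability that any such node exists. I need to be careful here: the order in which offspring are created in generation $t$ matters, because a child created in generation $t$ can only become a parent from generation $t+1$ onward. This is why each node records its generation and offspring index. It only helps the bound, because depth can increase by at most one per generation. With $\lambda$ arbitrary, I will count offspring slots (evaluations) rather than generations.

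\textbf{Step 2 (distance along paths).} Along a fixed path of length $d\le c'n$, the number of flipped bits is a sum of $dn$ independent Bernoulli$(1/n)$ variables. Its expectation is $d\le c'n$. A Chernoff bound shows it is below $19n/80$ with probability $1-e^{-\Omega(n)}$, once $c'$ is chosen, say, as $1/10$. A union bound over all paths present in the forest (expected number $\le T\cdot$poly$=$poly$(n)$) shows that every individual created within $T$ evaluations has $\ones{z^1}<n/2$ with probability $1-o(1)$. Combining this with the initialization event, the probability that no Pareto optimal point is found within $c\mu n$ evaluations is $1-o(1)$. Hence the expected runtime is $\Omega(\mu n)$. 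Since $\mu\ge 2\sqrt{n}+1$, this gives the $\Omega(n^{1.5})$ bound.

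\textbf{Main obstacle.} The main obstacle is Step 1. The counting must be made rigorous despite two complications: survival selection (which nodes are in $P_t$) depends on the random mutations, and a generation may contain many offspring slots ($\lambda$ large, possibly larger than $\mu$). To handle this, I would argue that the parent choices are uniform over a set of size exactly $\mu$ regardless of selection. Hence any fixed sequence of slot indices $(t_1,i_1),\dots,(t_d,i_d)$ with strictly increasing generations forms a parent-child chain with probability at most $\mu^{-d}$. I would then sum over at most $\binom{T}{d}$ such sequences and over $\mu$ roots. The fact that the $t_j$ must be strictly increasing is exactly what the triple labeling encodes, and it is what makes the bound valid even for large $\lambda$.
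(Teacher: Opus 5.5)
Your overall route is the paper's: a Witt-style family forest, the chain-counting bound $\mu\binom{T}{k}\mu^{-k}$ for the depth, then a Chernoff bound along root-to-node paths, then a union bound. Counting evaluation slots instead of (generation, index) pairs is fine, since $\binom{t}{k}\lambda^k\le\binom{t\lambda}{k}$. Targeting the first-half gap of $19n/80$ is also the right choice (the paper's text speaks of the second half, but it is the first half that must gain at least $19n/80\ge n/5$ ones). Step~1, which you flag as the main obstacle, is handled correctly.

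The genuine gap is in Step~2. You apply the unconditional tail of $\mathrm{Bin}(dn,1/n)$ to each path \emph{present in the forest}, and then union-bound over the polynomially many such paths. But survival selection decides which paths are present. On \CLIMB it keeps precisely the offspring whose mutations increased $\ones{x^1}$. Your remark that selection ``can only choose which nodes get children'' is exactly the mechanism that biases realized lineages toward many flips. So the flip count along a realized lineage is not binomial, and the inequality $\Pr[\sigma\text{ realized and heavy}]\le\Pr[\sigma\text{ realized}]\cdot\Pr[\text{heavy}]$, which your union bound implicitly uses, is not available.

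The repair merges your two steps, and it has the product form of the paper's final display.
\begin{itemize}
\item Pre-sample an independent mutation mask for every offspring slot.
\item Conditional on all masks, each parent choice is still uniform over the $\mu$ current individuals. So a fixed root together with a fixed slot sequence $\sigma$ of length $k$ forms a chain with conditional probability at most $\mu^{-k}$.
\item Hence the probability that $\sigma$ forms a chain \emph{and} its masks flip at least $19n/80$ first-half bits is at most $\mu^{-k}\Pr[\mathrm{Bin}(kn/2,1/n)\ge 19n/80]$.
\item Summing over roots and sequences gives $\sum_{k\le c'n}\mu\,(ecn/k)^k e^{-\kappa n}$.
\end{itemize}
This weighted count is not polynomial: $(ecn/k)^k$ peaks at $e^{cn}$ for $k=cn$, so you need $c<\kappa$. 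With your constants this holds. Even counting all $kn$ bits (mean at most $n/10$, threshold $19n/80$), Chernoff gives $\kappa\approx 0.068$, while $ec\le c'/2$ forces $c\le 1/(20e)<0.02$. So the proof goes through, but only once Step~2 is rewritten as this joint bound rather than a union bound over realized paths.
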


\begin{proof}
By Lemma~\ref{lem:initialization-success}, every individual $x$ initializes with $(\ones{x^1},\ones{x^2}) \in [\frac{19n}{80},\frac{21n}{80}] \times [\frac{n}{8},\frac{3n}{8}]$ with probability at least $1-\mu e^{-\Omega(n)} = 1-e^{-\Omega(n)}$. These individuals have fitness distinct from zero. Suppose that this happens. Let $P_0:=\{x_0^1, \ldots , x_0^{\mu}\}$. Denote by $\text{tree}_t(x_0^i)$ the family tree with root $x_0^i$. First, we show that after $t$ generations, it holds that $\Pr(\text{depth}(\text{tree}_t(x_0^i)) \geq 3t\lambda/\mu) = \mu e^{-\Omega(t \lambda/\mu)}$. To this end, we consider the probability of generating a path of length $k$ consisting of nodes with labels $(t_1,i_1), \ldots , (t_k,i_k)$ as second and third components for $0 \leq t_1 < \ldots < t_k \leq t$ and $i_1, \ldots , i_k \in [\lambda]$. If $\ell>1$, the probability to increase such a path by the node with label $(t_\ell,i_\ell)$ is at most $1/\mu$, since the corresponding individual of the node with label $(t_{\ell-1},i_{\ell-1})$ must be chosen as parent in generation $t_
\ell$ and in the $(i_\ell)$-th iteration of the For-loop. So the probability for creating such a path is at most $(1/\mu)^{k-1}$. Further, there are $\binom{t}{k}$ ways to label such a path by the second component, and $\lambda^k$ many to label it with respect to the third one. Hence, by a union bound, the probability to evolve any path of length $k$ is at most $\mu \cdot \binom{t}{k} \cdot (\lambda/\mu)^k$. If $k \geq 3t\lambda/\mu$, we obtain with $\binom{t}{k} \leq (et/k)^k$ by Stirling's formula
\begin{align*}
\mu \binom{t}{k} \frac{\lambda^k}{\mu^k} \leq \mu \left(\frac{et \lambda}{k\mu}\right)^k \leq \mu \left(\frac{e}{3}\right)^{k} = \mu e^{-\Omega(t \lambda/\mu)}.
\end{align*}
Suppose that this does not happen, particularly, all $\mu$ evolved trees have depth of at most $\alpha:= 3t\lambda/\mu$. We then bound the Hamming distance between $y_0:=x_0^i$ and solutions that appear in triples of the tree with root $y_0$ as follows. We call a path to a leaf at time $t$ \emph{bad} if a solution with Hamming distance of at least $n/5$ to $y_0$ with respect to the second half is created along that path. Fix such a path. When adding the $\ell$-th node with triple $(y_\ell,t_\ell,i_m)$ for a fixed $i_m \in [\lambda]$ to this path, we see that $\expect{H(y_{\ell-1}^2,y_\ell^2)} = 1/4$ since $1/4$ bits are flipped in expectation with standard bit mutation in the second half of $y_{\ell-1}^2$. Hence, along that path, the Hamming distance changes by at most $k/4 \leq 3t\lambda/(4\mu)$ in expectation. For $t \leq 4n \mu/(27\lambda)$, we obtain that the Hamming distance changes by at most $n/9$ in expectation. So by a classical Chernoff bound, we have for $\delta = 4/5$ and every $\ell \in [k]$ that $\Pr(H(y_0^2,y_{\ell}^2) \geq n/5) = \Pr(H(y_0^2,y_{\ell}^2) \geq (1+\delta) \cdot n/9) \leq e^{-\expect{X} \delta^2/3} \leq e^{-16n/675}$ after $t \leq 4n \mu/(27\lambda)$ generations. By a union bound on all possible paths of length at most $\alpha = \lfloor{12n/1350}\rfloor$ and all nodes on such a path, for $t \leq 4n \mu/(1350\lambda)$, the probability is at most 
\[
\alpha \sum_{k=1}^{\alpha} \mu \binom{t}{k} \frac{\lambda^k}{\mu^k} \cdot e^{-16n/675} \leq \alpha \mu \sum_{k=1}^{\alpha} \left(\frac{4en}{27k}\right)^k \cdot e^{-16n/675} = \alpha \mu e^{-\Omega(n)} = e^{-\Omega(n)}
\]
that a bad path evolves by Stirning's formula. This concludes the proof, since the existence of a bad path is necessary for creating a Pareto optimal search point.
\end{proof}


\section{Conclusions and Discussions}
We introduced \CLIMB, a bi-objective problem class where reaching the Pareto front via hill-climbing is initially harder than covering it. To highlight the benefits of variable population sizes, we proposed \NSGADYN, a variant of \nsga with a dynamic population size. We showed that both GSEMO and \NSGADYN optimize \CLIMB in expected $O(n \ln n)$ fitness evaluations, whereas a variant of \nsga with a fixed parent and offspring population size needs $\Omega(n^{1.5})$ evaluations just to find a single Pareto-optimal point. This variant includes a steady-stade, as well as the vanilla version. This contrasts sharply with classical pseudo-Boolean benchmarks such as $\LOTZ$, $\OMM$, and $\COCZ$, where GSEMO and \nsga have essentially the same runtime bounds. To the best of our knowledge, this is the first result showing a superconstant speedup of GSEMO over the vanilla \nsga on a pseudo-Boolean problem. Similar advantages are likely against other EMOAs with fixed population sizes, such as SPEA-2, SMS-EMOA, or \nsgaIII. Particularly, our results suggest that dynamic population sizes can improve performance, particularly during the hill-climbing phase. However, our work has several limitations that point to interesting directions for future research. First, our analysis is limited to \CLIMB. It remains unclear how MOEAs with dynamic populations behave on more complex benchmark problems, and whether they affect not only hill-climbing but also the exploration of different regions of the search space, potentially leading to exponential speedups. Second, we only considered specific update rules for adjusting the population size for \NSGADYN. One could also investigate alternative choices of $g$ and $h$, and how they may accelerate the optimization process. Overall, this work is a first step towards understanding dynamic population sizes in evolutionary multi-objective optimization, and we hope that the insights gained in this paper will be useful for both theory and practice. For instance, it may help to design refinements of our proposed \NSGADYN to further improve performance of MOEAs on complex real world problems.

\bibliographystyle{plainnat}
\bibliography{neurips26}

@article{WANG2022101104,
title = {Evolutionary Algorithm with Dynamic Population Size for Constrained Multiobjective Optimization},
journal = {Swarm and Evolutionary Computation},
volume = {73},
pages = {101104},
year = {2022},
noissn = {2210-6502},
nodoi = {https://doi.org/10.1016/j.swevo.2022.101104},
nourl = {https://www.sciencedirect.com/science/article/pii/S2210650222000748},
author = {Bing-Chuan Wang and Zhong-Yi Shui and Yun Feng and Zhongwei Ma}
}

@article{Liang2023,
    author = {Liang, Jing and Chen, Zhaolin and Wang, Yaonan and Ban, Xuanxuan and Qiao, Kangjia and Yu, Kunjie},
    title = {A dual-population constrained multi-objective evolutionary algorithm with variable auxiliary population size},
    journal = {Complex \& Intelligent Systems},
    volume = {9},
    number = {5},
    pages = {5907-5922},
    year = {2023},
    nodoi = {10.1007/s40747-023-01042-2},
    nourl = {https://doi.org/10.1007/s40747-023-01042-2}
}

@article{TIAN2025129296,
title = {Adaptive population sizing for multi-population based constrained multi-objective optimization},
journal = {Neurocomputing},
volume = {621},
pages = {129296},
year = {2025},
nodoi = {https://doi.org/10.1016/j.neucom.2024.129296},
nourl = {https://www.sciencedirect.com/science/article/pii/S0925231224020678},
author = {Ye Tian and Ruiqin Wang and Yajie Zhang and Xingyi Zhang}
}

@article{NSGAMACHINE,
title = {Hybrid machine learning and {NSGA-II} optimization of rail anti-climbing systems: a multi-criteria design framework},
journal = {Structural and Multidisciplinary Optimization},
volume = {69},
number = {17},
year = {2025},
noissn = {1359-8368},
nodoi = {https://doi.org/10.1016/j.compositesb.2025.112961},
nourl = {https://www.sciencedirect.com/science/article/pii/S1359836825008674},
author = {Izanloo Mehri and Shahravi Majid and Khalkhali Abolfazl}
}

@article{WittLower,
    author = {Witt, Carsten},
    title = {Runtime Analysis of the $(\mu + 1)$ {EA} on Simple {P}seudo-{B}oolean Functions},
    journal = {Evolutionary Computation},
    volume = {14},
    number = {1},
    pages = {65-86},
    year = {2006},
    nodoi = {10.1162/evco.2006.14.1.65},
    nourl = {https://doi.org/10.1162/evco.2006.14.1.65}
}

@article{NSGANEURAL,
title = {Physics-Informed Neural Network with NSGA II and Levenberg–Marquardt Method for Kinetic Modeling in Heavy Oil Hydrocracking},
journal = {Industrial and Engineering Chemistry Research},
volume = {64},
pages = {19624 -- 19640},
number = {40},
year = {2025},
noissn = {1359-8368},
nodoi = {https://doi.org/10.1016/j.compositesb.2025.112961},
nourl = {https://www.sciencedirect.com/science/article/pii/S1359836825008674},
author = {Shahla Alizadeh and Souvik Ta and Lakshminarayanan Samavedham},
publisher = {American Chemical Society}
}

@inproceedings{ZhengLuiDoerrAAAI22,
  author    = {Weijie Zheng and
               Yufei Liu and
               Benjamin Doerr},
  title     = {A First Mathematical Runtime Analysis of the Non-dominated Sorting
               Genetic Algorithm {II} {(NSGA-II)}},
  booktitle = {Proceedings of the {AAAI} Conference on Artificial Intelligence, {AAAI}~2022},
  pages     = {10408-10416},
  publisher = {{AAAI} Press},
  year      = {2022}
}

@inproceedings{QianYZ15nips,
  author       = {Chao Qian and
                  Yang Yu and
                  Zhi{-}Hua Zhou},
  OPTeditor       = {Corinna Cortes and
                  Neil D. Lawrence and
                  Daniel D. Lee and
                  Masashi Sugiyama and
                  Roman Garnett},
  title        = {Subset selection by Pareto optimization},
  booktitle    = {Neural Information Processing Systems, NIPS 2015},
  OPTbooktitle    = {Advances in Neural Information Processing Systems 28: Annual Conference
                  on Neural Information Processing Systems 2015, December 7-12, 2015,
                  Montreal, Quebec, Canada},
  pages        = {1774--1782},
  year         = {2015},
}

@article{ClearingI,
    author = {Osuna, Edgar Covantes and Sudholt, Dirk},
    title = {On the Runtime Analysis of the Clearing Diversity-Preserving Mechanism},
    journal = {Evolutionary Computation},
    volume = {27},
    number = {3},
    pages = {403-433},
    year = {2019}
}

@article{Diversity,
    author = {Lengler, Johannes and Opris, Andre and Sudholt, Dirk},
    title = {Analysing Equilibrium States for Population Diversity},
    journal = {Algorithmica},
    volume = {27},
    number = {3},
    pages = {2317-2351},
    volume = {86},
    issue = {7},
    year = {2024},
}

@inproceedings{DinotDHW23,
  author    = {Matthieu Dinot and Benjamin Doerr and Ulysse Hennebelle and Sebastian Will},
  title     = {Runtime analyses of multi-objective evolutionary algorithms in the presence of noise},
  booktitle = {International Joint Conference on Artificial Intelligence, {IJCAI} 2023},
  pages        = {5549--5557},
  publisher    = {ijcai.org},
  year         = {2023},
  OPTdoi          = {10.24963/ijcai.2023/616},
}

@inproceedings{QianZTY18,
  title={On multiset selection with size constraints},
  author={Qian, Chao and Zhang, Yibo and Tang, Ke and Yao, Xin},
  booktitle={Conference on Artificial Intelligence, {AAAI} 2018},
  pages={1395--1402},
  publisher = {{AAAI} Press},
  year={2018}
}

@inproceedings{DaOp2023,
  author       = {Duc-Cuong Dang and
                  Andre Opris and
                  Bahare Salehi and
                  Dirk Sudholt},
  noeditor       = {Sara Silva and
                  Lu{\'{\i}}s Paquete},
  title        = {Analysing the Robustness of {NSGA-II} under Noise},
  booktitle    = {Proceedings of the Genetic and Evolutionary Computation Conference ({GECCO}'23)},
  pages        = {642--651},
  publisher    = {{ACM} Press},
  year         = {2023},
  nourl          = {https://doi.org/10.1145/3583131.3590421},
  nodoi          = {10.1145/3583131.3590421}
}

@article{MARTINEZCOMESANA2023106770,
title = {Optimisation of {LSTM} neural networks with {NSGA-II} and {FDA} for {PV} installations characterisation},
journal = {Engineering Applications of Artificial Intelligence},
volume = {126},
pages = {106770},
year = {2023},
noissn = {0952-1976},
nodoi = {https://doi.org/10.1016/j.engappai.2023.106770},
nourl = {https://www.sciencedirect.com/science/article/pii/S0952197623009545},
author = {Miguel Martínez-Comesaña and Javier Martínez-Torres and Pablo Eguía-Oller}
}

@article{CHOWDHURY2026109365,
title = {Machine learning-assisted {NSGA-II}-{TOPSIS} optimization of weld strength and processing time in fused deposition modeling},
journal = {Results in Engineering},
volume = {29},
pages = {109365},
year = {2026},
author = {Fahmida Khatun Chowdhury and Md Sadman Sami and Md. Jahedul Alam and Ahmed Sayem and Mohammad Muhshin Aziz Khan},
}

@article{ZHANG2025112961,
title = {Machine learning-based multi-objective optimization of ceramic composite armor plates via the {NSGA-II} genetic algorithm},
journal = {Composites Part B: Engineering},
volume = {307},
pages = {112961},
year = {2025},
noissn = {1359-8368},
nodoi = {https://doi.org/10.1016/j.compositesb.2025.112961},
nourl = {https://www.sciencedirect.com/science/article/pii/S1359836825008674},
author = {Xinzhe Zhang and Rentao Wang and Chuankun Zang and Kai Song and Xiaolu Wang and Guoju Li}
}

@article{TightOneMax,
author = {Denis Antipov and Benjamin Doerr},
title = {A Tight Runtime Analysis for the ${(\mu + \lambda )}$ {EA}},
journal = {Algorithmica},
year = {2021},
nodoi = {https://doi.org/10.1002/bit.70119},
nourl = {https://analyticalsciencejournals.onlinelibrary.wiley.com/doi/abs/10.1002/bit.70119},
pages = {1054-1095},
volume = {83},
issue = {4}
}

@inproceedings{Giel03,
  author    = {Oliver Giel},
  title     = {Expected runtimes of a simple multi-objective evolutionary algorithm},
  booktitle = {Congress on Evolutionary Computation, {CEC}
               2003},
  pages     = {1918--1925},
  publisher = {{IEEE}},
  year      = {2003},
}

@inproceedings{Thierens03,
  author    = {Dirk Thierens},
  OPTeditor    = {Carlos M. Fonseca and Peter J. Fleming and
               Eckart Zitzler and Kalyanmoy Deb and Lothar Thiele},
  title     = {Convergence time analysis for the multi-objective counting ones problem},
  booktitle = {Evolutionary Multi-Criterion Optimization, {EMO} 2003},
  OPTbooktitle = {Evolutionary Multi-Criterion Optimization, Second International Conference, {EMO} 2003, Faro, Portugal, April 8-11, 2003, Proceedings},
  pages     = {355--364},
  publisher = {Springer},
  year      = {2003},
}

@inproceedings{LaumannsTZWD02,
  author    = {Marco Laumanns and
               Lothar Thiele and
               Eckart Zitzler and
               Emo Welzl and
               Kalyanmoy Deb},
  title     = {Running time analysis of multi-objective evolutionary algorithms on
               a simple discrete optimization problem},
  booktitle = {Parallel Problem Solving from Nature,  {PPSN} 2002},
  pages     = {44--53},
  publisher = {Springer},
  year      = {2002},
  OPTdoi       = {10.1007/3-540-45712-7\_5},
}

@article{MultiVehicles,
    author = {Liao Xingtao and Li Qing and Yang Xujing and Zhang Weigang},
    title = {Multiobjective Optimization for Crash Safety Design of Vehicles Using Stepwise Regression Model},
    journal = {Structural and Multidisciplinary Optimization},
    year = {2008},
    pages = {561–569},
    volume = {35},
    Issue = {6},
}

@InProceedings{10.1007/978-3-540-70928-2_55,
author={K{\"o}ppen, Mario
and Yoshida, Kaori},
title={Substitute Distance Assignments in {NSGA-II} for Handling Many-Objective Optimization Problems},
booktitle={Evolutionary Multi-Criterion Optimization},
year={2007},
publisher={Springer Berlin Heidelberg},
noaddress={Berlin, Heidelberg},
pages={727--741}
}

@ARTICLE{704576,
  author={Ishibuchi, H. and Murata, T.},
  journal={IEEE Transactions on Systems, Man, and Cybernetics, Part C (Applications and Reviews)}, 
  title={A Multi-Objective Genetic Local Search Algorithm and its Application to Flowshop Scheduling}, 
  year={1998},
  volume={28},
  number={3},
  pages={392-403},
  nodoi={10.1109/5326.704576}}

@article{LUUKKONEN2023102537,
title = {Artificial Intelligence in Multi-Objective Drug Design},
journal = {Current Opinion in Structural Biology},
volume = {79},
pages = {102537},
year = {2023},
issn = {0959-440X},
nodoi = {https://doi.org/10.1016/j.sbi.2023.102537},
nourl = {https://www.sciencedirect.com/science/article/pii/S0959440X23000118},
author = {Sohvi Luukkonen and Helle W. {van den Maagdenberg} and Michael T.M. Emmerich and Gerard J.P. {van Westen}}
}

@article{OPRIS2026,
title = {Many-objective problems where crossover is provably essential},
journal = {Artificial Intelligence},
volume = {350},
pages = {104453},
year = {2026},
noissn = {0004-3702},
nodoi = {https://doi.org/10.1016/j.artint.2025.104453},
nourl = {https://www.sciencedirect.com/science/article/pii/S0004370225001729},
author = {Andre Opris}
}

@inproceedings{li2026why,
title={Why Popular {MOEA}s Are Popular: Proven Advantages in Approximating the Pareto Front},
author={Mingfeng Li and Qiang Zhang and Weijie Zheng and Benjamin Doerr},
booktitle={The Thirty-ninth Annual Conference on Neural Information Processing Systems},
year={2026},
url={https://openreview.net/forum?id=8OvST1bejm}
}

@article{Jun2004,
  author    = {Jun He and Xin Yao},
  title     = {A study of drift analysis for estimating
              computation time of evolutionary algorithms},
  journal   = {Natural Computing},
  year      = {2004},
  volume    ={3},
  pages     ={21-35},
}

@inproceedings{ApproximationSPEA2,
author = {Alghouass, Yasser and Doerr, Benjamin and Krejca, Martin S. and Lagmah, Mohammed},
title = {Proven approximation guarantees in multi-objective optimization: {SPEA2} beats {NSGA-II}},
pages = {8833-8841},
year = {2025},
noisbn = {978-1-956792-06-5},
nourl = {https://doi.org/10.24963/ijcai.2025/982},
nodoi = {10.24963/ijcai.2025/982},
booktitle = {Proceedings of the Thirty-Fourth International Joint Conference on Artificial Intelligence},
articleno = {982},
numpages = {9},
location = {Montreal, Canada},
series = {IJCAI 2025},
publisher = {ijcai.org}
}

@inproceedings{ApproximationNSGAIII,
author = {Deng, Renzhong and Zheng, Weijie and Doerr, Benjamin},
title = {The first theoretical approximation guarantees for the non-dominated sorting genetic algorithm {III} ({NSGA\nobreakdash-III})},
year = {2025},
noisbn = {978-1-956792-06-5},
nourl = {https://doi.org/10.24963/ijcai.2025/986},
nodoi = {10.24963/ijcai.2025/986},
booktitle = {Proceedings of the Thirty-Fourth International Joint Conference on Artificial Intelligence},
articleno = {986},
pages = {8867--8875},
numpages = {9},
location = {Montreal, Canada},
series = {IJCAI 2025},
publisher = {ijcai.org}
}

@inproceedings{OprisMultimodal,
    author       = {Andre Opris},
    title        = {A First Runtime Analysis of {NSGA-III} on a Many-Objective Multimodal Problem:                    Provable Exponential Speedup via Stochastic Population Update},
    booktitle    = {Proceedings of the Thirty-Fourth International Joint Conference on
                    Artificial Intelligence},
    pages        = {8903--8911},
    publisher    = {ijcai.org},
    year         = {2025},
    series       = {IJCAI 2023},
    nourl          = {https://doi.org/10.24963/ijcai.2023/612},
    nodoi          = {10.24963/IJCAI.2023/612},
}

@inproceedings{Lessons,
author = {Dang, Duc-Cuong and Opris, Andre and Sudholt, Dirk},
title = {Why Dominance Is Not Enough: Lessons from Practical Evolutionary Multi-Objective Algorithms},
year = {2025},
publisher = {ACM Press},
noaddress = {New York, NY, USA},
nourl = {https://doi.org/10.1145/3712256.3726414},
nodoi = {10.1145/3712256.3726414},
booktitle = {Proceedings of the Genetic and Evolutionary Computation Conference},
pages = {1604–1612},
nolocation = {NH Malaga Hotel, Malaga, Spain},
series = {GECCO 2025}
}

@inproceedings{Giel2003,
  author    = {Oliver Giel},
  title     = {Expected runtimes of a simple multi-objective evolutionary algorithm},
  booktitle = {Proceedings of the {IEEE} Congress on Evolutionary Computation ({CEC}~'03)},
  pages     = {1918-1925},
  publisher = {{IEEE} Press},
  year      = {2003}
}

@article{Laumanns2004,
  author    = {Marco Laumanns and Lothar Thiele and Eckart Zitzler},
  title     = {Running Time Analysis of Multiobjective Evolutionary Algorithms
               on {P}seudo-{B}oolean Functions},
  journal   = {{IEEE} Transactions on Evolutionary Computation},
  volume    = {8},
  number    = {2},
  pages     = {170-182},
  year      = {2004}
}

@inproceedings{Zheng2022,
  author    = {Weijie Zheng and
               Benjamin Doerr},
  noeditor    = {Jonathan E. Fieldsend and
               Markus Wagner},
  title     = {Better Approximation Guarantees for the {NSGA-II} by Using the Current
               Crowding Distance},
  booktitle = {Proceedings of the Genetic and Evolutionary Computation Conference},
  series = {GECCO 2022},
  pages     = {611--619},
  publisher = {{ACM} Press},
  year      = {2022}
}

@inproceedings{Qu2022PPSN,
  author    = {Benjamin Doerr and Zhongdi Qu},
  title     = {A First Runtime Analysis of the {NSGA-II} on a Multimodal Problem},
  booktitle = {Proceedings of the International Conference on Parallel Problem Solving from Nature},
  series = {PPSN XVII},
  novolume    = {13399},
  pages     = {399--412},
  publisher = {Springer},
  year      = {2022}
}

@article{Deb2002,
  author    = {Kalyanmoy Deb and
               Amrit Pratap and
               Sameer Agarwal and
               T. Meyarivan},
  title     = {A Fast and Elitist Multiobjective Genetic Algorithm: {NSGA-II}},
  journal   = {{IEEE} Transactions on Evolutionary Computation},
  volume    = {6},
  number    = {2},
  pages     = {182-197},
  year      = {2002}
}

@misc{NSGAIICode2011,
  author    = {Kalyanmoy Deb},
  title     = {{NSGA-II} Source Code in {C}, version 1.1.6},
  howpublished = {\url{https://www.egr.msu.edu/~kdeb/codes/nsga2/nsga2-gnuplot-v1.1.6.tar.gz}},
  year      = {2011},
  note      = {Accessed: 2022-08-15}
}

@inproceedings{Badkobeh2015,
  author    = {Golnaz Badkobeh and
               Per Kristian Lehre and
               Dirk Sudholt},
  noeditor    = {Jun He and
               Thomas Jansen and
               Gabriela Ochoa and
               Christine Zarges},
  title     = {Black-box Complexity of Parallel Search with Distributed Populations},
  booktitle = {Proceedings of the Foundations of Genetic Algorithms},
  series = {FOGA 2015},
  pages     = {3-15},
  publisher = {{ACM} Press},
  year      = {2015}
}

@book{Jansen2013,
  author    = {Thomas Jansen},
  title     = {Analyzing Evolutionary Algorithms -- The Computer Science Perspective},
  series    = {Natural Computing Series},
  publisher = {Springer},
  year      = {2013},
  noisbn      = {978-3-642-17338-7}
}

@article{SUDHOLT20092511,
title = {The impact of parametrization in memetic evolutionary algorithms},
journal = {Theoretical Computer Science},
volume = {410},
number = {26},
pages = {2511-2528},
year = {2009},
noissn = {0304-3975},
nodoi = {https://doi.org/10.1016/j.tcs.2009.03.003},
nourl = {https://www.sciencedirect.com/science/article/pii/S0304397509002072},
author = {Dirk Sudholt}
}

@article{DANG2024104098,
title = {Crossover can Guarantee Exponential Speed-ups in Evolutionary Multi-Objective Optimisation},
journal = {Artificial Intelligence},
volume = {330},
pages = {104098},
year = {2024},
issn = {0004-3702},
nodoi = {https://doi.org/10.1016/j.artint.2024.104098},
nourl = {https://www.sciencedirect.com/science/article/pii/S0004370224000341},
author = {Duc-Cuong Dang and Andre Opris and Dirk Sudholt},
}

@inproceedings{MOEASubset,
author = {Deng, Renzhong and Zheng, Weijie and Li, Mingfeng and Liu, Jie and Doerr, Benjamin},
title = {Runtime Analysis for State-of-the-Art Multi-objective Evolutionary Algorithms on the Subset Selection Problem},
year = {2024},
noisbn = {978-3-031-70070-5},
publisher = {Springer},
booktitle = {Proceedings of the International Conference on Parallel Problem Solving from Nature},
series = {PPSN XVIII},
year = {2024},
noaddress = {Berlin, Heidelberg},
nourl = {https://doi.org/10.1007/978-3-031-70071-2_17},
nodoi = {10.1007/978-3-031-70071-2_17},
pages = {264–279},
nolocation = {Hagenberg, Austria}
}

@inproceedings{Krejca2025b,
  author    = {Benjamin Doerr and Tudor Ivan and Martin S. Krejca},
  title     = {Speeding Up the {NSGA-II} With a Simple Tie-Breaking Rule},
  booktitle = {Proceedings of the {AAAI} Conference on Artificial Intelligence},
  series = {AAAI~2025},
  pages     = {26964-26972},
  publisher = {{AAAI} Press},
  year      = {2025},
  nodoi       = {10.1109/CEC.2013.6557784}
}

@inproceedings{Zheng_Doerr_2024,
author={Zheng, Weijie and Doerr, Benjamin},
title={Runtime Analysis of the {SMS-EMOA} for Many-Objective Optimization}, 
novolume={38}, 
nourl={https://ojs.aaai.org/index.php/AAAI/article/view/30077}, 
nodoi={10.1609/aaai.v38i18.30077}, 
nonumber={18}, 
booktitle={Proceedings of the AAAI Conference on Artificial Intelligence}, 
series = {AAAI 2024},
publisher={{AAAI} Press},
year={2024}, 
pages={20874-20882}}

@inproceedings{doerr2025tightruntimeGSEMO,
  author    = {Benjamin Doerr and Martin S. Krejca and Andre Opris},
  title     = {Tight Runtime Guarantees From Understanding the Population Dynamics of the {GSEMO} Multi-Objective Evolutionary Algorithm},
  booktitle = {Proceedings of the Thirty-Fourth International Joint Conference on Artificial Intelligence},
  series = {IJCAI 2025},
  pages     = {8876--8884},
  publisher = {ijcai.org},
  noaddress   = {Montreal, Canada},
  year      = {2025},
  nourl     = {https://doi.org/10.24963/ijcai.2025/987}
}

@inproceedings{DoerrNearTight,
author = {Wietheger, Simon and Doerr, Benjamin},
title = {Near-Tight Runtime Guarantees for Many-Objective Evolutionary Algorithms},
year = {2024},
pages = {153--168},
noisbn = {978-3-031-70084-2},
nourl = {https://doi.org/10.1007/978-3-031-70085-9_10},
nodoi = {10.1007/978-3-031-70085-9_10},
booktitle = {Proceedings of the International Conference on Parallel Problem Solving from Nature},
series = {PPSN XVIII}
}

@inproceedings{DoerrQu22,
  author    = {Benjamin Doerr and
               Zhongdi Qu},
  noeditor    = {G{\"{u}}nter Rudolph and
               Anna V. Kononova and
               Hern{\'{a}}n E. Aguirre and
               Pascal Kerschke and
               Gabriela Ochoa and
               Tea Tusar},
  title     = {A First Runtime Analysis of the {NSGA-II} on a Multimodal Problem},
  booktitle = {Proceedings of the International Conference on Parallel Problem Solving from Nature ({PPSN}~'22)},
  noseries    = {LNCS},
  volume    = {13399},
  pages     = {399--412},
  publisher = {Springer},
  year      = {2022}
}

@inproceedings{DoerrQ23b,  OPTnote={no JV},
  author       = {Benjamin Doerr and Zhongdi Qu},
  OPTeditor       = {Brian Williams and Yiling Chen and Jennifer Neville},
  title        = {Runtime Analysis for the {NSGA-II:} Provable Speed-ups from Crossover},
  booktitle = {Proceedings of the {AAAI} Conference on Artificial Intelligence},
  series = {{AAAI} 2023},
  pages        = {12399--12407},
  publisher    = {{AAAI} Press},
  year         = {2023}
  }

@inproceedings{ijcai2025p988,
  title     = {Scalable Speed-ups for the {SMS-EMOA} from a Simple Aging Strategy},
  author    = {Li, Mingfeng and Zheng, Weijie and Doerr, Benjamin},
  booktitle = {Proceedings of the Thirty-Fourth International Joint Conference on
               Artificial Intelligence, {IJCAI-25}},
  nopublisher = {International Joint Conferences on Artificial Intelligence Organization},
  publisher = {ijcai.org},
  noeditor    = {James Kwok},
  pages     = {8885--8893},
  year      = {2025},
  month     = {8},
  nonote      = {Main Track},
  nodoi       = {10.24963/ijcai.2025/988},
  nourl       = {https://doi.org/10.24963/ijcai.2025/988},
}

@inproceedings{Opris26PopDyn,
      title={Towards a Rigorous Understanding of the Population Dynamics of the {NSGA-III}: Tight Runtime Bounds}, 
      author={Andre Opris},
      year={2026},
      pages={37125-37133},
      booktitle={Proceedings of the AAAI Conference on Artificial Intelligence},
      series = {AAAI 2026},
      publisher={{AAAI} Press},
      noaddress = {Singapore}
}

@inproceedings{DoerrKS2026,
  author       = {Benjamin Doerr and
                  Martin S. Krejca and
                  Milan Stankovic},
  noeditor       = {Sven Koenig and
                  Chad Jenkins and
                  Matthew E. Taylor},
  title        = {Improved Runtime Guarantees for the {SPEA2} Multi-Objective Optimizer},
  booktitle    = {Proceedings of the {AAAI} Conference on Artificial Intelligence, {AAAI} 2026},
  pages        = {36855--36863},
  publisher    = {{AAAI} Press},
  year         = {2026},
  nourl          = {https://doi.org/10.1609/aaai.v40i43.41012},
  nodoi          = {10.1609/AAAI.V40I43.41012}
}

@inproceedings{Bian2023,
    author       = {Chao Bian and
                    Yawen Zhou and
                    Miqing Li and
                    Chao Qian},
    title        = {Stochastic Population Update Can Provably Be Helpful in Multi-Objective
                    Evolutionary Algorithms},
    booktitle    = {Proceedings of the Thirty-Second International Joint Conference on
                    Artificial Intelligence ({IJCAI} '23)},
    pages        = {5513--5521},
    publisher    = {ijcai.org},
    address      = {USA},
    year         = {2023},
    nourl          = {https://doi.org/10.24963/ijcai.2023/612},
    nodoi          = {10.24963/IJCAI.2023/612},
    bibsource    = {dblp computer science bibliography, https://dblp.org}
}

@inproceedings{Dang2024Illustrating,
    title={Illustrating the Efficiency of Popular Evolutionary Multi-Objective Algorithms Using Runtime Analysis},
    author={Duc-Cuong Dang and Andre Opris and Dirk Sudholt},
    year={2024},
    booktitle = {Proceedings of the Genetic and Evolutionary Computation Conference ({GECCO}~'24)},
    publisher = {{ACM} Press},
    noaddress   = {USA},
    pages = {484-492},
}

@inproceedings{DoerrQu2022,
  author    = {Benjamin Doerr and
               Zhongdi Qu},
  title     = {Runtime Analysis for the {NSGA-II}: Provable Speed-Ups From Crossover},
  booktitle = {Proceedings of the AAAI Conference on Artificial Intelligence, {AAAI}~2023},
  publisher = {{AAAI} Press},
  pages     = {to appear, preprint available at \url{https://arxiv.org/abs/2208.08759}},
  year      = {2023}
}

@INPROCEEDINGS{NSGASTEADYSTATE,
  author={Mishra, Sumit and Mondal, Samrat and Saha, Sriparna},
  booktitle={2016 IEEE Congress on Evolutionary Computation (CEC)}, 
  title={Fast implementation of steady-state {NSGA-II}}, 
  year={2016},
  volume={},
  number={},
  pages={3777-3784},
  nokeywords={Steady-state;Sociology;Statistics;Sorting;Time complexity;Evolutionary computation},
  nodoi={10.1109/CEC.2016.7744268}}

@article{doerr2025speedingpopdynsize,
  title={Speeding Up the {NSGA-II} via Dynamic Population Sizes},
  author={Doerr, Benjamin and Krejca, Martin S and Wietheger, Simon},
  journal={arXiv preprint arXiv:2509.01739},
  year={2025}
}

@inproceedings{WiethegerD23,
  author       = {Simon Wietheger and
                  Benjamin Doerr},
  title        = {A Mathematical Runtime Analysis of the Non-dominated Sorting Genetic
                  Algorithm {III} {(NSGA-III)}},
  booktitle    = {Proceedings of the International Joint Conference on Artificial Intelligence},
  series = {IJCAI 2023},
  pages        = {5657--5665},
  nopublisher    = {ijcai.org},
  year         = {2023},
  nourl          = {https://doi.org/10.24963/ijcai.2023/628},
  nodoi          = {10.24963/IJCAI.2023/628},
  bibsource    = {dblp computer science bibliography, https://dblp.org},
  publisher = {ijcai.org}
}

@article{Cerf2023,
  author       = {Sacha Cerf and
                  Benjamin Doerr and
                  Benjamin Hebras and
                  Yakob Kahane and
                  Simon Wietheger},
  title        = {The First Proven Performance Guarantees for the Non-Dominated Sorting
                  Genetic Algorithm {II} {(NSGA-II)} on a Combinatorial Optimization
                  Problem},
  nojournal      = {CoRR},
  volume       = {abs/2305.13459},
  year         = {2023},
  nourl          = {https://doi.org/10.48550/arXiv.2305.13459},
  nodoi          = {10.48550/arXiv.2305.13459},
  eprinttype    = {arXiv},
  eprint       = {2305.13459},
  bibsource    = {dblp computer science bibliography, https://dblp.org}
}

@inproceedings{RenBLQ24,
  author       = {Shengjie Ren and Chao Bian and
                  Miqing Li and Chao Qian},
  OPTeditor       = {Michael Affenzeller and
                  Stephan M. Winkler and
                  Anna V. Kononova and
                  Heike Trautmann and
                  Tea Tusar and
                  Penousal Machado and
                  Thomas B{\"{a}}ck},
  title        = {A first running time analysis of the {S}trength {P}areto {E}volutionary
                  {A}lgorithm~2 {(SPEA2)}},
  booktitle    = {Proceedings of the International Conference of Parallel Problem Solving from Nature},
  series = {{PPSN} XVIII},
  pages        = {295--312},
  publisher    = {Springer},
  year         = {2024}
}

@inproceedings{OprisNSGAIII,
author = {Opris, Andre and Dang, Duc-Cuong and Neumann, Frank and Sudholt, Dirk},
title = {Runtime Analyses of {NSGA-III} on Many-Objective Problems},
year = {2024},
publisher = {{ACM} Press},
nourl = {https://doi.org/10.1145/3638529.3654218},
nodoi = {10.1145/3638529.3654218},
booktitle = {Proceedings of the Genetic and Evolutionary Computation Conference},
series = {GECCO 2024},
pages = {1596–1604},
numpages = {9}
}

@article{Dang2024,
    title = {Crossover can guarantee exponential speed-ups in evolutionary multi-objective optimisation},
    journal = {Artificial Intelligence},
    volume = {330},
    pages = {104098},
    year = {2024},
    noissn = {0004-3702},
    nodoi = {https://doi.org/10.1016/j.artint.2024.104098},
    nourl = {https://www.sciencedirect.com/science/article/pii/S0004370224000341},
    author = {Duc-Cuong Dang and Andre Opris and Dirk Sudholt}
}

\cleardoublepage

\appendix

\section{Technical appendices and supplementary material}

This document contains the proofs that we omitted in the main paper in full details, due to space restrictions.

\structural*
\begin{proof}
If $\mu_{t+1} = \mu_t + \lambda_t$ then $P_{t+1} = R_t$ and the claim holds. So suppose that $\mu_{t+1} = 4|S_t|$. At first we argue that there are at most $4|S_t|$ many individuals with positive crowding distance in $F_{t+1}^1$ as then all individuals with positive crowding distance in $F_{t+1}^1$ survive. Let $v$ be a fitness vector covered by at least five first ranked individuals. This means that there are $\ell \geq 5$ individuals $x_1, \ldots , x_\ell$ with fitness vector $v$ sorted with respect to $f_1$ in descending order, say $(x_1, \ldots , x_\ell)$. With respect to $f_2$, assume the order $(x_{i_1}, x_{i_2}, \ldots , x_{i_\ell})$ for distinct $i_1, \ldots , i_\ell \in [\ell]$. Let $j \in \{1, \ldots , \ell\} \setminus (\{1,\ell\} \cup \{i_1,i_\ell\})$. Then $j=i_k$ for a $k \in \{2, \ldots , \ell-1\}$ and $x_j$ has crowding distance zero, since $f_1(x_{j-1})-f_1(x_{j+1}) = 0$ and $f_2(x_{i_{k-1}})-f_2(x_{i_{k+1}}) = 0$, proving the argument.

Note that for every vector $v$ covered by an individual $x \in F_{t+1}^1$, there exists an individual $y \in F_{t+1}^1$ with $f(y)=v$ that has positive crowding distance. To see this, let $V:=\{v \in \mathbb{N}_0^2 \mid \text{ there is $x \in F_{t+1}^1$ with } f(x)=v \}$. If $v_1 = \max \{ v_1 \mid v \in V \}$, then there exists an $x \in F_{t+1}^1$ with crowding distance equal to infinity (which is $x_{k_1}$ in the above definition of the crowding distance with respect to $F_{t+1}^1$). Otherwise, consider for example the individual $x_i$ covering $v$ that appears first in the descending order with respect to $f_1$. Its crowding distance with respect to $F_{t+1}^1$ is at least $(f_1(x_{i-1}) - f_1(x_{i+1}))/(f_1(x_{k_1})-f_1(x_{k_M})) > 0$ since $f_1(x_{k_1}) > f_1(x_{k_M})$ (otherwise $v_1 = \max\{ v_1 \mid v \in V\}$). This proves the lemma, since such a $y$ survives by the argument above.
\end{proof} 

\initializationsuccess*
\begin{proof}
By a classical Chernoff bound, the probability that an individual $x$ initializes with $(\ones{x^1},\ones{x^2}) \in [\frac{19n}{80},\frac{21n}{80}] \times [\frac{3n}{16},\frac{5n}{16}] \subset [\frac{n}{8},\frac{3n}{8}] \times [\frac{3n}{16},\frac{5n}{16}]$ is $1-e^{-\Omega(n)}$, since the expected number of ones in each half is $\frac{n}{4}$. Note also that each search point $x$ with $(\ones{x^1},\ones{x^2}) \in [\frac{n}{8},\frac{3n}{8}] \times [\frac{3n}{16},\frac{5n}{16}]$ satisfies 
\begin{align*}
\ones{x^1}\left(\frac{1}{2}-\frac{2}{\sqrt{n}}\right) &\leq \frac{3n}{8}\left(\frac{1}{2}-\frac{2}{\sqrt{n}}\right) = \frac{3n}{16}-\frac{6\sqrt{n}}{8} \leq \frac{3n}{16} \leq \ones{x^2} \leq \frac{5n}{16} = \frac{n}{2} - \frac{3n}{16}\\
&\leq \frac{n}{2} - \frac{3n}{16} + \frac{3 \sqrt{n}}{4} = \frac{n}{2} - \frac{3n}{8}\left(\frac{1}{2} - \frac{2}{\sqrt{n}}\right) \leq \frac{n}{2} - \ones{x^1}\left(\frac{1}{2} - \frac{2}{\sqrt{n}}\right)
\end{align*}
and hence, $x$ has fitness distinct from zero. The statement for the whole population of size $\mu_0$ follows by a union bound, since $\mu_0 = \text{poly}(n)$. 
\end{proof}

\initializationpreparation*
\begin{proof}
    Let $y$ be an individual generated by mutation on $x \in P_t$. We obtain under the condition that we select $x$ as parent, denoted by event $B_x$,
	\begin{align*}
	\expect{H(y^2,x^*) \mid B_x} &= H(x^2,x^*) + \frac{n/2 - H(x^2,x^*)}{n} - \frac{H(x^2,x^*)}{n}\\
    &= H(x^2,x^*) -\frac{2 H(x^2,x^*)}{n} + \frac{1}{2}.
	\end{align*}
    Note that the expected number of flipped bits increasing the Hamming distance between $x^2$ and $x^*$ is $(n/2 - H(x^2,x^*))/n$ (i.e. the number of positions in the second half of $x$ where $x,x^*$ coincide divided by $n$), while the expected number of bits decreasing the Hamming distance between $x^2$ and $x^*$ is $H(x^2,x^*)/n$. Hence, if $x$ among $\mu$ parents is chosen uniformly at random, then by the law of total probability
	\[
	\expect{H(y^2,x^*)} = \frac{1}{\mu} \sum_{x \in P_t} \left(H(x^2,x^*) -\frac{2 H(x^2,x^*)}{n} + \frac{1}{2}
	\right) = \frac{\Phi(P_t)}{\mu}-\frac{2 \Phi(P_t)}{\mu n}+\frac{1}{2}
	\]
	and therefore, if $\lambda$ offspring $y_1, \ldots ,y_{\lambda}$ are generated,
	\[
	\expect{\Phi(Q_t) \mid \Phi(P_t)} = \lambda \left(\frac{\Phi(P_t)}{\mu}-\frac{2 \Phi(P_t)}{\mu n}+\frac{1}{2}\right) = \frac{\lambda \Phi(P_t)}{\mu} -\frac{2 \lambda \Phi(P_t)}{\mu n}+\frac{\lambda}{2}
	\]
	where $Q_t=\{y_1, \ldots , y_{\lambda}\}$. This implies for the joint population $R_t = P_t \cup Q_t$ that 
    \begin{align*}
    \expect{\Phi(R_t) \mid \Phi(P_t)} &= \expect{\Phi(P_t) \mid \Phi(P_t)} + \expect{\Phi(Q_t) \mid \Phi(P_t)} \\
    &= \left(1+\frac{\lambda}{\mu} \right)\Phi(P_t) -\frac{2 \lambda \Phi(P_t)}{\mu n}+\frac{\lambda}{2}.
    \end{align*}
    
    To form $P_{t+1}$, $\lambda$ individuals chosen uniformly at random are removed from $R_t$ (since $\mu_{t+1}=\mu_t=\mu$). Note that the probability to remove a certain individual $x \in R_t$ is $\lambda/(\mu + \lambda)$, and hence 
	\begin{align*}
	\expect{\Phi(P_{t+1}) \mid \Phi(R_t)} &= \Phi(R_t) - \sum_{x \in R_t} \frac{\lambda H(x^2,x^*)}{\mu + \lambda} = \Phi(R_t) - \frac{\lambda}{\mu + \lambda} \Phi(R_t) = \frac{\mu \Phi(R_t)}{\mu + \lambda}
	\end{align*}
    which implies 
    \begin{align*}
	\expect{\Phi(P_{t+1}) \mid \Phi(P_t)} &= \frac{\mu}{\mu + \lambda} \left(\left(1+\frac{\lambda}{\mu} \right)\Phi(P_t) -\frac{2 \lambda \Phi(P_t)}{\mu n}+\frac{\lambda}{2}\right)\\
    &= \Phi(P_t)-\frac{2 \lambda \Phi(P_t)}{n(\mu + \lambda)} +\frac{\lambda \mu}{2(\mu + \lambda)}\\
    &= \Phi(P_t) \left(1 - \frac{2 \lambda}{n(\mu + \lambda)} \right) + \frac{\lambda \mu}{2(\mu + \lambda)}.
	\end{align*}
    This proves the lemma by noting that $\lambda = \mu = 1$ in case of GSEMO. 
\end{proof}

\initializationdrift*
\begin{proof}
Define $\Phi(P_t):= \sum_{x \in P_t} H(x^2,x^*)$. Let $T$ be the time until we created an individual $x$ with $H(x^2,x^*) \geq n/4-\sqrt{n}$. We obtain $H(x^2,x^*) \geq n/4-\sqrt{n}$ for an individual $x \in P_t$ if $\Phi(P_t) \geq \mu n/4-\mu\sqrt{n}$. So suppose $\Phi(P_t) < \mu n/4-\mu\sqrt{n}$. We see by Lemma~\ref{lem:initialization-preparation} that 
$$\expect{\Phi(P_{t+1}) \mid \Phi(P_t)} = \Phi(P_t)\left(1 - \frac{2 \lambda}{n(\mu + \lambda)} \right) + \frac{\lambda \mu}{2(\mu + \lambda)}$$
and therefore, 
\begin{align*}
\expect{\Phi(P_{t+1})-\Phi(P_t) \mid \Phi(P_t)} &= \frac{\lambda \mu}{2(\mu + \lambda)} - \frac{2 \lambda}{n(\mu + \lambda)} \Phi(P_t) \\
&> \frac{\lambda \mu}{2(\mu + \lambda)} - \frac{2 \lambda \mu}{n(\mu + \lambda)} (n/4-\sqrt{n}) \\
&= \frac{2 \lambda \mu}{\sqrt{n}(\mu + \lambda)} \geq \frac{1}{\sqrt{n}} =:\delta.
\end{align*}
By the additive drift theorem~\citep{Jun2004} we obtain $\expect{T} \leq (\mu n/4-\mu\sqrt{n})/\delta \leq \mu n \sqrt{n}/4$. In other words, after at most $\mu n \sqrt{n}/4$ generations (or $O(\mu^2 n \sqrt{n})$ fitness evaluations since $\mu$ is fixed) in expectation, we created a search point $x$ with $H_2(x^*,x) \geq n/4-\sqrt{n}$, or a search point $y$ with fitness distinct from zero. In the former case, either this search point is in $W$ and hence, has also fitness distinct from zero, or is an offspring of an individual $z$ with the following property. If $\ones{z^2}<n/4-\sqrt{n}$ then $\ones{x^2}>n/4+\sqrt{n}$, or if $\zeros{z^2}<n/4-\sqrt{n}$ then $\zeros{x^2}>n/4+\sqrt{n}$. In other words, a mutation on $z$ causes $\ones{z^2}$ to cross the interval $[n/4-\sqrt{n},n/4+\sqrt{n}]$ which we denote as a \emph{failure}. However, this requires to flip $2 \sqrt{n}$ many bits at once. So the probability of a failure is at most
\begin{align*}
\frac{\binom{n/2}{2 \sqrt{n}}}{n^{2 \sqrt{n}}} \leq \frac{(n/2)^{2 \sqrt{n}} \cdot e^{2 \sqrt{n}} / (2\sqrt{n})^{2 \sqrt{n}}}{n^{2 \sqrt{n}}} = O\left(\frac{1}{\left(\sqrt{n}\right)^{2\sqrt{n}}}\right) = O\left(\frac{1}{n^{\sqrt{n}}}\right)
\end{align*}
where we used $\binom{n}{k} \leq \left(\frac{ne}{k}\right)^k$ for all $k,n \in \mathbb{N}$ with $k \leq n$. By a union bound, the probability that a failure occurs in some given generation $t \le T$ is at most $O(\mu/n^{\sqrt{n}})$. Since $\expect{T} = O(\mu n \sqrt{n})$, again by a union bound, a failure occurs with probability $O(\mu^2 n \sqrt{n}/n^{\sqrt{n}}) = o(1)$ within $T$ generations. If a failure occurs in some generation $t \leq T$, and there is still no search point $x \in P_{t+1} \cap W$, we repeat the above arguments in a new period of generations of length $T$, where we redefine "left" and "right" individuals with respect to generation $t+1$ as follows: Call an individual $y$ \emph{left} if $\ones{y^2} < n/4-\sqrt{n}$ in generation $t+1$ and otherwise \emph{right}. Then, in generation $t'>t+1$, call an individual \emph{left}, if it is the offspring of a left individual and otherwise, if it is the offspring of a right individual, call it \emph{right}. The expected number of periods is $1 + o(1)$, which proves the lemma.
\end{proof}

\onesincrease*

\begin{proof}
One can create a $z$ with $\ones{z^2} > \delta_t$ by flipping a zero in the first half of $x$, and flipping a one in the second half of $x$ if $\ones{x^2}>\frac{n}{4}$ and otherwise a zero in the second half. This happens with probability at least $$\frac{n/2-\delta_t}{n} \cdot \frac{n/4}{n} \cdot \left(1-\frac{1}{n}\right)^{n-2} \geq \frac{n/2-\delta_t}{4en}.$$
We still have to argue why $y:=z$ has fitness distinct from zero or, in other words, why $(\frac{1}{2}-\frac{2}{\sqrt{n}})\ones{y^1} \leq \ones{y^2} \leq \frac{n}{2}-(\frac{1}{2}-\frac{2}{\sqrt{n}})\ones{y^1}$ holds. Note that $(\frac{1}{2}-\frac{2}{\sqrt{n}})\ones{x^1} \leq \ones{x^2} \leq \frac{n}{2}-(\frac{1}{2}-\frac{2}{\sqrt{n}})\ones{x^1}$. So if $\ones{x^2} \leq \frac{n}{4}$ then $\ones{y^2} = \ones{x^2}+1$, and we obtain
\begin{align*}
\left(\frac{1}{2}-\frac{2}{\sqrt{n}}\right)\ones{y^1} &= \left(\frac{1}{2}-\frac{2}{\sqrt{n}}\right)(\ones{x^1}+1) = \left(\frac{1}{2}-\frac{2}{\sqrt{n}}\right)\ones{x^1}+\frac{1}{2}-\frac{2}{\sqrt{n}}\\
&\leq \left(\frac{1}{2}-\frac{2}{\sqrt{n}}\right)\ones{x^1}+1 \leq \ones{x^2}+1 = \ones{y^2},
\end{align*}
as well as
\begin{align*}
\ones{y^2} \leq \frac{n}{4}+1 \leq \frac{n}{2} - \frac{n}{4} +\sqrt{n} = \frac{n}{2} - \left(\frac{1}{2}-\frac{2}{\sqrt{n}}\right)\frac{n}{2} \leq \frac{n}{2} - \left(\frac{1}{2}-\frac{2}{\sqrt{n}}\right)\ones{y^1}, 
\end{align*}
since $\ones{y^1}$ is naturally bounded by $n/2$ from above. In a similar way, if $\ones{x^2} > \frac{n}{4}$ then $\ones{y^2} = \ones{x^2}-1 > \frac{n}{4}-1$ and therefore 
\begin{align*}
\left(\frac{1}{2}-\frac{2}{\sqrt{n}}\right)\ones{y^1} \leq \frac{n}{4} -\sqrt{n} \leq \frac{n}{4}-1<\ones{y^2}. 
\end{align*}
Further, since $\ones{x^2}$ is naturally bounded by $n/2$ from above,
\begin{align*}
\ones{y^2} &= \ones{x^2}-1 \leq \frac{n}{2}-\left(\frac{1}{2}-\frac{2}{\sqrt{n}}\right)\ones{x^1} - 1 \leq \frac{n}{2}-\left(\frac{1}{2}-\frac{2}{\sqrt{n}}\right)(\ones{x^1}+1)\\
&= \frac{n}{2}-\left(\frac{1}{2}-\frac{2}{\sqrt{n}}\right)\ones{y^1},
\end{align*}
proving the lemma, since by lemma~\ref{lem:first-ranked}, an individual $y$ with increased $\delta_t$ survives.
\end{proof}

\expectedpopulationsize*
\begin{proof}
Note that $\mu_0=1, \mu_1=2$, $\mu_2=4$ and $\mu_\ell \geq 4$ for all $\ell \geq 3$. For generation $t$ define $\delta_t:=\max\{i \in [n/2] \mid \text{there is $x \in P_t$ with $\ones{x^1}=i$ and $f(x) \neq 0$}\}$. Let $i^*:=\delta_0 \geq 19n/80$. Then, the expected number of fitness evaluations required for creating an individual $y$ with $\ones{y^1}>i^*$ is at most $4/p_{i^*}$, since at least $\mu_t/4$ individuals from $P_t$ are non-dominated and hence, satisfy $\ones{x^1} = i^*$. Note that the probability is at least $1/4$ to choose such an individual as parent in one trial. So $\expect{\mu_t^{(i^*)}} \leq 4/p_{i^*} \leq 320/p_{i^*}$. For $i \geq i^*$ suppose that $\expect{\mu_t^{(i)}} \leq 320/p_i$ for an induction on $i \in \{i^*, \ldots , n-2\}$ and for $i > i^*$ denote by $\mu_t^{(i,\text{up})}$ the population size in generation $t$ after $\delta_t = i$ increased in generation $t-1$. Denote by $\lambda_{t-1}^{(i,\text{up})}$ the number of offspring created in generation $t-1$. We claim the following.

\begin{lemma}
\label{lem:bound-population-next-generation}
In expectation, at most $19\lambda_{t-1}^{(i,\text{up})}/80$ individuals $x$ from $P_t$ have fitness distinct from zero and satisfy  $\ones{x^2}>\delta_t$.  
\end{lemma}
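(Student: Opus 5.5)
The plan is to bound the count by linearity of expectation over the $\lambda_{t-1}^{(i,\text{up})}$ offspring of generation $t-1$. For this, every individual of $P_t$ with $f(x)\neq 0$ and $\ones{x^2}>\delta_t$ must be charged to an offspring created in that generation. I would then show that each single offspring has this property with probability at most $19/80$.

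\emph{Step 1: reduce to offspring.} Write $P_t\subseteq P_{t-1}\cup Q_{t-1}$ and handle the survivors from $P_{t-1}$ first. Since $\delta$ rose from $i$ to $\delta_t>i$ in generation $t-1$, every $x\in P_{t-1}$ with $f(x)\neq0$ has $\ones{x^1}\le i<\delta_t$. For such $x$ the nonzero-fitness window of Definition~\ref{def:CLIMB} constrains $\ones{x^2}$. I would try to combine this window with the invariant maintained by the induction in the proof of Lemma~\ref{lem:expected-population-bounded-NSGADYN}: individuals present before the increase lie in the region already controlled at level $i$. The goal is to show that none of them has $\ones{x^2}>\delta_t$. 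If that fails, I would instead charge such survivors to the offspring that displaced them, so that the count is still at most the number of offspring.

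\emph{Step 2: per-offspring probability.} Fix one offspring $y$ created in generation $t-1$ from a parent $x$ chosen uniformly from $P_{t-1}$. Conditional on $x$, the value $\ones{y^2}$ is $\ones{x^2}$ plus a sum of $\pm1$ flips, with expected $O(1)$ flipped bits. Exceeding $\delta_t\ge 19n/80$ therefore requires either that $\ones{x^2}$ is already close to $\delta_t$, or that $\Omega(n)$ bits flip, which has probability $e^{-\Omega(n)}$. I would bound the fraction of parents in $P_{t-1}$ close to that threshold using the initial condition $\ones{x^2}\le 5n/16$ from Lemma~\ref{lem:initialization-success} together with the nonzero-fitness window. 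Combining the two cases should give $\Pr(f(y)\neq0,\ \ones{y^2}>\delta_t)\le 19/80$. Summing over the $\lambda_{t-1}^{(i,\text{up})}$ offspring then yields the claim.

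\emph{Main obstacle.} I expect Step 2, specifically the bound on the fraction of parents with $\ones{x^2}$ near $\delta_t$, to be the hard part. The window permits $\ones{x^2}$ up to about $n/2-\ones{x^1}/2$, which exceeds $19n/80$, so the fitness function alone does not forbid such individuals. I would first try a potential argument in the spirit of Lemma~\ref{lem:initialization-preparation}: track $\sum_x \ones{x^2}$ over the non-dominated part of the population and show that its mean stays near $n/4$, since selection in the second half is neutral. Markov's inequality would then give the constant. Checking that the resulting constant is exactly $19/80$ is the step I am least sure of.
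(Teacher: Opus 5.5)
\textbf{Gap: you bound the wrong event.} Your skeleton matches the paper's: charge the counted individuals to the $\lambda_{t-1}^{(i,\text{up})}$ offspring, bound each offspring's probability by $19/80$, and use linearity of expectation. The problem is that the condition $\ones{x^2}>\delta_t$ in the statement is a slip for the first half. The paper's proof only ever estimates the probability that an offspring $y$ of $x$ has $\ones{y^1}>\ones{x^1}$. The lemma is then used to bound $\expect{|S_t|}\le\expect{|F^1_{t+1}|}$, the number of first-ranked individuals after the increase, and all of these are nonzero-fitness individuals whose first-half count exceeds the old maximum $i$.

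Taken literally, the claim compares a second-half count with a first-half level, and nothing ties the two together, so it fails in general. For instance, suppose every member of $P_{t-1}$ has $\ones{x^2}\ge\delta_{t-1}+\sqrt n$. This happens with constant probability right after initialization, because the initial $\ones{x^1}$ and $\ones{x^2}$ fluctuate independently by $\Theta(\sqrt n)$ around $n/4$. Then the new top individual has $\ones{y^2}>\delta_t$ with probability $1-o(1)$, so for $\lambda_{t-1}=1$ the literal count is about $1>19/80$. This breaks each part of your plan:
\begin{itemize}
\item Your Step~1 goal, that no survivor has $\ones{x^2}>\delta_t$, is false.
\item Charging survivors to the offspring that displaced them cannot produce the factor $19/80$.
\item Step~2 stalls: the initial bound $\ones{x^2}\le 5n/16$ lies above $19n/80$, and a second-half mean of about $n/4$ gives via Markov only $(n/4)/\delta_t\ge 1/2$.
\end{itemize}
Your remark that the fitness window allows $\ones{x^2}$ well above $19n/80$ is exactly the symptom of this misreading.

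\textbf{What the intended reading needs.} Your own Step~1 observation, that every nonzero-fitness $x\in P_{t-1}$ has $\ones{x^1}\le i$, already disposes of the survivors. The counted individuals are therefore offspring created in generation $t-1$, and the paper charges each one to an increase of the first-half count over its parent. The missing ingredient is a per-offspring estimate involving only $x^1$: a parent with $\ones{x^1}\ge 19n/80$ yields $\ones{y^1}>\ones{x^1}$ with probability at most $19/80$. The paper proves this in two cases.
\begin{itemize}
\item If $\ones{x^1}\ge 21n/80$, a union bound over the at most $19n/80$ zeros of $x^1$ suffices.
\item If $19n/80\le\ones{x^1}<21n/80$, fix $\lfloor 19n/80\rfloor$ one-positions $L$ and equally many zero-positions $L'$ of $x^1$, leaving a set $C$ of at most $n/40+2$ positions. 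An increase needs either a flip in $C$, which has probability at most $1/40+o(1)$, or a net gain inside $L\cup L'$. Since $|L|=|L'|$, a net gain is exactly as likely as a net loss. No bit of $L\cup L'$ flips with probability at least $(1-1/n)^{n/2}\ge e^{-1/2}-o(1)$, so a net gain has probability at most $\tfrac12-\tfrac{1}{2\sqrt e}+o(1)\le 1/5$.
\end{itemize}
Altogether this is below $9/40<19/80$, and linearity over the $\lambda_{t-1}^{(i,\text{up})}$ offspring gives the lemma.
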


\begin{proof}
We show that the probability that an individual $x$ with $\ones{x^1} \geq 19n/80$ creates an offspring $y$ with $\ones{y^1}>\ones{x^1}$ with probability at most $19/80$. Then the result follows since the expected number of offspring with an increased number of ones in the first half (including those $x$ with $\ones{x^2}>\delta_t$) is at most $19\lambda_{t-1}^{(i,\text{up})}/80$. If $\ones{x^1} \geq 21n/80$, the probability follows by a union bound, since it requires to flip a zero bit, and there are at most $n/2-21n/80 = 19n/80$ zero bits to flip in the first half of $x$. So suppose that $19n/80 \leq \ones{x^1} < 21n/80$. Fix $L = \lfloor{19n/80}\rfloor$ positions with ones and $L' = \lfloor{19n/80}\rfloor$ positions of zeros in the first half of $x$. Denote by $C$ the remaining positions in the first half. Then, to increase the number of ones in the first half, one has to flip a bit in the region $C$, or to create an offspring with more ones than zeros in the region $L \cup L'$. Suppose the latter happens with probability $p^{\text{up}}$. Note that this probability is the the same as the probability of creating an offspring with more zeros than ones in $L \cup L'$ by symmetry. Flipping no bits in $L \cup L'$ happens with probability $(1-1/n)^{2|L|} \geq (1-1/n)^{n/2} \geq 1/\sqrt{e} - o(1)$. Since we have $2p^{\text{up}} + (1-1/n)^{2|L|} = 1$, we can estimate $p^{\text{up}} \leq 1/2-(1-1/n)^{2|L|}/2 \leq 1/2-1/(2\sqrt{e}) + o(1) \leq 1/5$ for $n$ sufficiently large. Since $|C| \leq n/40 + 2$, we see by a union bound that a bit in region $C$ is flipped with probability at most $1/40+o(1)$. Hence, we obtain for the probability to increase $\ones{x^1}$ the upper bound $1/2-1/(2\sqrt{e}) + 1/40 + o(1) < 1/5+1/40 = 9/40 < 19/80$ for $n$ sufficiently large. If $\ones{x^1} < 19n/80$, we obtain also that with probability at most $19/80$ the offspring $y$ satisfies $\ones{y^2} > \delta_t$, since this probability is not larger compared to the case when $\ones{x} \geq 19n/80$, proving the lemma. 
\end{proof} 

With Lemma~\ref{lem:bound-population-next-generation}, we see that $\expect{\mu_t^{(i,\text{up})}} \leq 4 \cdot 19\lambda_{t-1}^{(i,\text{up})}/80 = 76\lambda_{t-1}^{(i,\text{up})}/80$ (since $\mu_t^{(i,\text{up})}$ is bounded by $4|S_t|$ and $\expect{|S_t|} \leq \expect{|F_{t+1}^1|} \leq 19\lambda_{t-1}^{(i,\text{up})}/80$). Considering also at most $4/p_{i+1}$ expected fitness evaluations until we increase $\delta_t$ again, we see that $\expect{\mu_t^{(i+1)}} \leq \expect{\mu_t^{(i,\text{up})}} + 4/p_{i+1} \leq 76\lambda_{t-1}^{(i,\text{up})}/80 + 4/p_{i+1} \leq 76 \cdot 320 /(80p_i) + 4/p_{i+1} \leq 304/p_{i+1} + 4/p_{i+1} = 308/p_{i+1} \leq 320/p_{i+1}$, proving the lemma.
\end{proof}

\subphases*
\begin{proof}
We consider all three subphases separately. 

\textbf{Subphase 1:} During this phase, the maximum number of mutually incomparable non-dominated solutions is one, since all search points have fitness zero. Consequently, the population size is one for GSEMO, and at most four for \NSGADYN. In the latter case, from the third generation onward we have $\mu_t = 4$, so Lemma~\ref{lem:initialization-drift} applies with $\mu = 1$ for GSEMO and $\mu = 4$ for \NSGADYN (from the third generation on). So for both algorithms, this yields that an individual with nonzero fitness is created after $O(n \sqrt{n})$ fitness evaluations.

\textbf{Subphase 2:} Note that in one generation, the probability is $1/\mu$ to choose an individual $x$ with value $\delta_t:=\max\{i \in \{0, \ldots , n/2\} \mid \text{there is $x \in P_t$ with $\ones{x^1}=i$ and $f(x) \neq 0$}\} < 19n/80$ as parent and then, the probability is at least $(n-19n/80)/n \cdot (1-1/n)^{n-1} \geq 61/(80e) = \Omega(1)$ to flip a zero to one in the first half while keeping the remaining bits unchanged. This increases $\delta_t$, since individuals $y \in R_t$ with $\ones{y^1}>\delta_t$ are protected (compare with Lemma~\ref{lem:first-ranked}). So we need at most $\lceil 19n/80 \rceil$ such steps. Moreover, in the case of \NSGADYN, we have $\lambda_t \leq \mu_t \leq 4|S_t| \leq 4(n/2+1)$, while for GSEMO it holds that $|P_t| \leq n/2+1$. Note also that, for GSEMO, all individuals $x \in P_t$ satisfy $\ones{x^1} = \delta_t$, and only one offspring per generation is created. Altogether, this shows that a good individual is obtained after $O(n^2)$ fitness evaluations for \NSGADYN and $O(n)$ evaluations for GSEMO.

\textbf{Subphase 3:} This subphase applies only on \NSGADYN, since in case of GSEMO, all individuals $x \in P_t$ satisfy $\ones{x^1} = \delta_t \geq 19n/80$ and are therefore good. Note that by the non-dominated sorting procedure, the number of good individuals can only decrease if all bad individuals $y$ are removed from $R_t$. Further, if at least $4(n/2+1) = 2n+4$ good individuals are created, then all bad individuals are removed by the non-dominated sorting procedure (since $\mu_t \leq 2n+4$ for all $t$), and hence, Subphase 3 is completed. So it is enough to repeat the following event $2n+4$ times. In one generation, choose a good individual $x$ as parent (prob. $1/\mu_t \geq 1/(2n+4)$) and flip no bits (prob. $(1-1/n)^n \geq 1/4$) which happens with probability at least $1-(1-1/(4 \mu_t))^{\lambda_t} \geq \frac{\lambda_t/(4 \mu_t)}{1+\lambda_t/(4 \mu_t)} = \frac{1}{4 \mu_t/\lambda_t+1}$ in one generation where the first inequality holds because of Lemma 10 in~\citep{Badkobeh2015}. Hence, the expected number of generations to increase the number of good individuals is at most $4\mu_t/\lambda_t+1 \leq 4|S_t|+1 = O(n)$. Since one generation consists of $O(n)$ fitness evaluations, and we have to create $O(n)$ good individuals, the total number of fitness evaluations to finish this subphase is at most $O(n^3)$.
\end{proof}


\end{document}